\theoremstyle{plain}% default
\newtheorem{thm}{Theorem}
\newtheorem{lem}[thm]{Lemma}
\newtheorem{prop}[thm]{Proposition}
\newtheorem{remark}[thm]{Remark}
\newcommand{\MLP}[1]{(MLP'$_{#1}$)}
\newcommand{\PSLP}{P_e}
\newcommand{\PLP}{P_{e^*}}
\title{On optimizing over lift-and-project closures}
\author{Pierre Bonami\thanks{Supported by ANR grant ANR06-BLAN-0375}\\
{\em \small Laboratoire d'Informatique Fondamentale, CNRS/Aix Marseille Universit\'e, Marseille, France}
}%\date{}
\begin{document}
\maketitle
\begin{abstract}
The lift-and-project closure is the relaxation obtained by computing all lift-and-project cuts from the initial formulation of a mixed integer linear program or equivalently by computing all mixed integer Gomory cuts read from all tableau's corresponding to feasible and infeasible bases. In this paper, we present an algorithm for approximating the value of the lift-and-project closure. The originality of our method is that it is based on a very simple cut generation linear programming problem which is obtained from the original linear relaxation by simply modifying the bounds on the variables and constraints. This separation LP can also be seen as the dual of the cut generation LP used in disjunctive programming procedures with a particular normalization. We study some properties of this separation LP in particular relating it to the equivalence between lift-and-project cuts and Gomory cuts shown by Balas and Perregaard. Finally, we present some computational experiments and comparisons with recent related works.

\end{abstract}

\section{Introduction}
In this paper, we consider Mixed Integer Linear Programs (MILP) of the form:
\begin{equation*}
\label{MILP}
\begin{aligned}
&\max\, {c'}^T x\\
& s.t.\\
& A'x \ge b, \\
& x\in \mathbb Z^{p}_+ \times \mathbb R^{n-p}_+,
\end{aligned}
\tag{MILP}
\end{equation*}
where $A'$ is an $m \times n$ rational matrix, $c' \in \mathbb Q^n$ and $b \in \mathbb Q^m$.

In the last 20 years, cutting plane methods have become one of the main ingredients for solving MILP. State of the art solvers are branch-and-cut algorithms that employ a number of different cutting plane algorithms to automatically strengthen formulations: Gomory Mixed Integer (GMI) Cuts \cite{gomory:63}, Mixed Integer Rounding cuts \cite{Nemhauser.Wolsey:90}, Knapsack Covers \cite{Balas.Zemel:78}, Flow-covers \cite{Padberg.Van-Roy.ea:85,Roy.Wolsey:87}. The positive impact of these different methods on the solution algorithms for MILP have been evaluated in many independent computational studies for example \cite{bixby.et.al:04}.

A central concept in cutting planes for MILP is that of elementary closures. Simply put, the elementary closure of a given family of cuts is the relaxation obtained by applying all cuts of the said family that can be obtained using the initial formulation of MILP only (without a recursive application). Two well known closures are the Chv\'atal-Gomory closure \cite{Chvatal:73} obtained by applying all possible Gomory Fractional cuts \cite{Gomory:60} and the split closure  \cite{cook.kannan.schrijver:90}. Elementary closure have been a fruitful concept from a theoretical standpoint. In particular, four important results are the facts that the Chv\'atal-Gomory closure is a polyhedron \cite{Chvatal:73} on which it is NP-hard to optimize \cite{eisenbrand:99} and that the split closure is also a polyhedron \cite{cook.kannan.schrijver:90} on which it is NP-Hard to optimize \cite{caprara.letchford:03}. More recently, closures have also been considered from a computational point of view. Several authors have computed empirically the strength of different closures on problem instances from the literature:
Fischetti and Lodi have computed the Chv\'atal-Gomory closure \cite{Fischetti.Lodi:07}, Bonami and Minoux the lift-and-project closure \cite{Bonami.Minoux:05}, 
Bonami et. al. the projected Chv\'atal-Gomory closure \cite{Bonami.Cornuejols.Dash.et.al:08}, Balas and Saxena the split closure \cite{Balas.Saxena:08} and Dash, G\"unluk and Lodi the MIR closure \cite{Dash.Gunluk.Lodi:10}. These studies have shown that elementary closure often give rise to strong relaxation of MILPs. In particular, the Split and MIR closures, which are equivalent, often close a very significant portion of the integrity gap, even though computing time are prohibitive. In that view an interesting line of research is to devise faster algorithms that could give a fairly good approximation of the split closures.

In this paper, we are interested in approximating the split closure by looking at simpler (and weaker) closures that are defined by two families of cuts: GMI cuts read from LP tableau's and {\em strengthened lift-and-project cuts} \cite{Balas.Ceria.et.al:93}. It was shown by Balas and Perregaard \cite{Balas.Perregaard:03} that both classes of cuts are equivalent and define the same closure which we call {\em lift-and-project closure}. The lift-and-project closure is a polyhedron but, to the best of our knowledge, it is not known whether it can be optimized in polynomial time. A weaker closure that can be optimized in polynomial time is the elementary lift-and-project closure on which one can separate in polynomial time by using disjunctive programming.  Our goal in this paper is to devise efficient algorithms to compute the elementary lift-and-project closure and to approximate heuristically the lift-and-project closure.

Our approach is similar to the one proposed in \cite{Bonami.Minoux:05}: we optimize the elementary lift-and-project closure by a cutting plane algorithm and strengthen each cut individually to approximate the lift-and-project closure. Our main contribution with respect to \cite{Bonami.Minoux:05} is that we use a different cut generation oracle. While in \cite{Bonami.Minoux:05} a linear program in an extended space is solved to generate each cut, here we solve a linear program in the original space of variables which we call {\em membership LP}. Furthermore, we are able to establish some nice properties of this membership LP, and show that it is typically solved in very few iterations.

Our work is related to \cite{Balas.Perregaard:03}. First the membership LP is related to disjunctive programming and the classical Cut Generation LP (CGLP). It can be seen as the dual of a simplified version of the CGLP used in \cite{Balas.Perregaard:03}. The simplicity of its structure allows us to give a new view on the equivalence between strengthened lift-and-project cuts and GMI cuts that was shown in \cite{Balas.Perregaard:03}. It also gives further insights into this equivalence. In particular a separation algorithm was proposed in \cite{Balas.Perregaard:03} to separate lift-and-project cuts in the LP tableau of the linear relaxation. One limitation of the algorithm is that it only works by improving an existing cut, our separation procedure removes this limitation. 

Our work is also related to several recent computational studies. Dash and Goycoolea \cite{Dash.Goycoolea:10} and Fischetti and Salvagnin \cite{Fischetti.Salvagnin:10} proposed two different methods for separating rank-1 GMI cuts from alternate bases of the LP relaxation. Our work can be seen as a third proposal for finding rank-1 GMI cuts. We also note that the same separation LP that we use, was independently used in a recent work by Fischetti and Salvagnin \cite{Fischetti.Salvagnin:10b} to also optimize on the simple lift-and-project closure.

In Section 2, we give formal definitions of the various cutting planes and closures we are interested in. In section 3, we introduce the membership LP and establish its relations to disjunctive programming. In Section 4, we establish various properties of the solutions of the membership LP and relate it to GMI cuts. In Section 5, we discuss the practical solution of the membership LP and its use for optimizing the lift-and-project closure. Finally, in Section 6, we present computational results.

\section{Definitions and basic results}
The Linear Programming (LP) relaxation of (MILP) obtained by dropping all integrity constraints is
\begin{equation*}
\label{LP}
\begin{aligned}
&\max\, {c'}^T x\\
& s.t.\\
& A'x \ge b, \\
& x\in \mathbb R^n_+.
\end{aligned}
\tag{LP}
\end{equation*}
We also consider the standard form of \eqref{LP} where slack variables are introduced:
\begin{equation*}
\label{LP:std}
\begin{aligned}
&\max\, {c}^T x\\
& s.t.\\
& Ax = b, \\
& x\in \mathbb R^{n+m}_+.
\end{aligned}
\tag{SLP}
\end{equation*}
In \eqref{LP:std}, we assume that the slack variables are the first $m$ ones ($x_1,\ldots,x_m$) and the structural variables are the last $n$ ($x_{m+1},\ldots, x_{m+n}$).
With this ordering of the variables $A=(-I\,A')$ (where $I$ is the identity matrix) and $c^T = (0^T,\,{c'}^T)$. 
We denote by $N:=\{m+1,\ldots,m+n\}$ the index set of structural variables in \eqref{LP:std}, by $N^I := \{m+1, \ldots, m+p\}$ the index set of integer constrained structural variables and by $M :=\{1,\ldots,m\}$
the set of slack variables (or the set of constraints). 
Given a set $J \subseteq M \cup N$, we denote by $x_J$ the vector with component in $J$ and by
$A^J$ the sub-matrix of $A$ composed by the columns $i \in J$ ($A'=A^N$). Given $J \subseteq M$, we denote by $A_J$ the sub-matrix
of $A$ composed by the rows of $A$ indexed by $J$. By abuse of notations we will denote $A^{\{j\}}$ and $A_{\{i\}}$ by $A^j$ and $A_i$ respectively.

We denote the index set of variables in a basis of LP by $B:=\{i_1,\ldots,i_m\} \subseteq \{1,\ldots,n+m\}$ and the index set of nonbasic variables by $J:=\{1,\ldots,n+m\}\setminus B$. Given the index set of a basis $B$;
$A^B$ is the basis matrix (we recall that variables indexed by $B$ form a basis whenever $A^B$ is non-singular), the LP tableau corresponding to $B$ is given by
\begin{equation*}
x_B = {A^B}^{-1} b - {A^B}^{-1} A^J x_J.
\end{equation*}
Finally, we denote $\overline A = {A^B}^{-1} A$ and $\overline b = {A^B}^{-1}b$.

Next, we define the four cutting planes used in this paper: simple intersection cut, GMI cut, lift-and-project cut and strengthened lift-and-project cut. 

\subsection{Cutting planes}
\label{sec:cuts}
Before proceeding to the definitions of the various cutting planes, we briefly remind the concepts of disjunctive and split cuts that are central to the four methods. 

In MILP, we call disjunction a condition of the form $A^1 x \ge b^1 \vee A^2 x \ge b^2 \vee \ldots \vee A^q x \ge b^q$ that is satisfied by all solutions to \eqref{MILP} (i.e. each solution of \eqref{MILP} verifies at least one of the systems of inequalities). For example, if $(\pi,\pi_0) \in \mathbb Z^{n+1}$ is such that $\pi_{\{p+1,\ldots,n\}} = 0$ the {\em split disjunction} $\pi^T x \le \pi_0 \vee \pi^T x \ge \pi_0 + 1$ is valid for \eqref{MILP}. All inequalities presented here, result from split disjunctive in that, for a given $(\pi,\pi_0)$ they are valid inequalities for the set:
\begin{equation*}
P^{(\pi,\pi_0)} := \text{conv} \left(
\left\{x \in \mathbb R^n_+ : A'x \ge b, \pi^T x \le \pi_0 \right\} \cup 
\left\{x \in \mathbb R^n_+ : A'x \ge b, \pi^T x \ge \pi_0 + 1 \right\}
\right).
\end{equation*}
As such, all the inequalities presented belong to the family of {\em split cuts}\cite{cook.kannan.schrijver:90}.
\paragraph{The simple intersection cut}
Let $B$ be a basis of \eqref{LP:std} and $i_k \in B \cap N^I$ be such that $\overline b_k \not \in \mathbb Z$. The row of the simplex tableau corresponding to $i_k$ is
\begin{equation}
\label{row}
x_{i_k} + \sum\limits_{j \in J} \overline a_{kj} x_j = \overline b_{k}
\end{equation}
Let $f_{0} = \overline b_{k} - \lfloor \overline b_k \rfloor$.
The {\em intersection cut} \cite{Balas:79} from the convex set $\{x \in R^{n+m} : \lfloor \overline b_k \rfloor \le x_{i_k} \le \lceil \overline b_k \rceil\}$ applied to \eqref{row} is
\begin{equation}
\label{eq:inter}
\sum\limits_{j \in J} 
\max\left\{
\overline a_{kj}(1- f_{0}),\,
-\overline a_{kj}f_{0} \right\} x_j \ge f_{0}(1- f_{0})
\end{equation}
This cut is valid for all solution to \eqref{MILP} and is also known as the {\em simple disjunctive cut} from the condition $x_{i_k} \le \lfloor \overline b_k \rfloor \vee x_{i_k} \ge \lceil \overline b_k \rceil$.

\paragraph{Gomory Mixed Integer Cut}
The simple intersection cut can be strengthened using the integrity of the variables in $J \cap N^I$. 
For each $j \in J \cap N^I$ we define 
$f_j := \overline a_{kj} - \lfloor \overline a_{kj} \rfloor$.
We define $\pi \in \mathbb Z^{m+n}$ as
\begin{equation*}
\pi_j := \begin{cases}
\lfloor \overline a_{kj} \rfloor & \text{ if } j \in J \cap N^I \text{ and } f_j \le f_0,\\
\lceil \overline a_{kj} \rceil & \text{ if } j \in J \cap N^I \text{ and } f_j > f_0\\
1 & \text{ if } j = i_k,\\
0 & \text{otherwise.}
\end{cases}
\end{equation*}
The Gomory Mixed-Integer Cut is the intersection cut from the convex set $\{x \in R^{n+m} : \lfloor \overline b_k \rfloor \le \pi^T x \le \lceil \overline b_k \rceil\}$ applied to \eqref{row}:
\begin{equation}
\label{GMI}
\sum\limits_{\substack{j \in J \cap N^I\\ f_j \le f_0}}
f_j(1- f_{0}) x_j +
\sum\limits_{\substack{j \in J \cap N^I\\ f_j > f_0}}
(1-f_{j})f_{0} x_j +
\sum\limits_{j \in J \setminus N^I} 
\max\left\{
\overline a_{kj}(1- f_{0}),\,
-\overline a_{kj}f_{0} \right\} x_j \ge f_{0}(1- f_{0})
\end{equation}

We note that the GMI cut dominates all intersection cuts obtained from row \eqref{row} and convex sets of the form $\{x \in R^{n+m} : \lfloor \overline b_k \rfloor \le \pi^T x \le \lceil \overline b_k \rceil\}$ with $\pi_{(J \setminus N^I) \cup (B \setminus \{i_k\}) } = 0$ and $\pi_{i_k} = 1$ (see \cite{Andersen.Cornuejols.ea:05} for a proof).

\paragraph{The lift-and-project cut}
The lift-and-project cuts are the valid inequalities for the sets $P^{(e_k,\pi_0)}$ taken for any $k = 1,\ldots,p$ and $\pi_0 \in \mathbb Z$, 
obtained by intersecting the polyhedron $\{x \in \mathbb R^n_+ : A'x \ge b\}$ with a simple disjunction of the form $x_k \le \pi_0 \vee x_k \ge \pi_0 + 1$.

The fundamental theorem of Balas \cite{Balas:98,Balas:85} on unions of polyhedra allows to formulate the separation problem for $P^{(e_k,\pi_0)}$ as a linear program. 

\begin{thm}[\cite{Balas:98,Balas.Ceria.et.al:93}]
\label{thm:balas}
Let $\hat x \in \mathbb R^n_+$. $\hat x \in P^{(\pi,\pi_0)}$ if and only if the solution to
the Cut Generation LP
\begin{displaymath}
\label{CGLP}
\begin{aligned}
&\min \alpha^T \hat x - \beta \\
&s.t.:\\
&\alpha = u^T A^N + s - u_0 \pi,\\
&\alpha = v^T A^N + t + v_0 \pi,\\
&\beta = u^T b - u_0 \pi_0,\\
&\beta = v^T b + v_0 (\pi_0 + 1),\\
%&\sum_{i \in M} (u_i + v_i) + \sum_{i \in N} (s_i + t_i) u_0 + v_0 = 1,\\
&\alpha \in \mathbb R^n, \beta \in \mathbb R,\, u,v \in \mathbb R^{m}_+,\, s,\, t \in \mathbb R^n_+,\, u_0, v_0 \in \mathbb R_+,\\
\end{aligned}
%\tag{CGLP}
\end{displaymath}
is non-negative.
\end{thm}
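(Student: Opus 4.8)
The plan is to show that the feasible solutions of the Cut Generation LP are exactly the LP-duality (Farkas) certificates of the valid inequalities for $P^{(\pi,\pi_0)}$, and then to read off membership of $\hat x$ from the sign of the optimal value. Write $Q_1 := \{x \in \mathbb R^n_+ : A'x \ge b,\ \pi^T x \le \pi_0\}$ and $Q_2 := \{x \in \mathbb R^n_+ : A'x \ge b,\ \pi^T x \ge \pi_0+1\}$, so that $P^{(\pi,\pi_0)} = \mathrm{conv}(Q_1 \cup Q_2)$. Since an affine inequality holds on a convex hull iff it holds on each generating set, $\alpha^T x \ge \beta$ is valid for $P^{(\pi,\pi_0)}$ iff it is valid for $Q_1$ and for $Q_2$ separately. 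The first and third equalities of the CGLP together with $u,s,u_0 \ge 0$ are precisely a dual certificate that $\alpha^T x \ge \beta$ is implied by the system defining $Q_1$, and the second and fourth together with $v,t,v_0 \ge 0$ do the same for $Q_2$. I would also record the trivial point that the all-zero solution is feasible with objective $0$ and that the constraint system is a homogeneous cone in $(\alpha,\beta,u,v,s,t,u_0,v_0)$; hence the optimal value is either $0$ or $-\infty$, and ``non-negative'' means exactly ``equal to $0$'', i.e. the LP is bounded below.

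For the easy direction, suppose $\hat x \in P^{(\pi,\pi_0)}$ and take any feasible CGLP solution. For $x \in Q_1$ the first and third equalities give $\alpha^T x = u^T A^N x + s^T x - u_0 \pi^T x \ge u^T b - u_0 \pi_0 = \beta$, using $A^N x = A' x \ge b$, $u\ge 0$, then $s,x \ge 0$, then $\pi^T x \le \pi_0$, $u_0 \ge 0$; symmetrically the second and fourth equalities give $\alpha^T x \ge \beta$ for $x \in Q_2$. Thus $\alpha^T x \ge \beta$ holds on $Q_1 \cup Q_2$ and hence on $P^{(\pi,\pi_0)}$, so $\alpha^T \hat x - \beta \ge 0$. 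As this holds for every feasible solution, the optimal value is non-negative.

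For the converse I would argue contrapositively. If $\hat x \notin P^{(\pi,\pi_0)}$, then since $P^{(\pi,\pi_0)}$ is closed convex there is a separating inequality $\alpha^T x \ge \beta$, valid for $P^{(\pi,\pi_0)}$ with $\alpha^T \hat x < \beta$. Being valid for both $Q_1$ and $Q_2$, LP duality applied to $\min\{\alpha^T x : x \in Q_i\}$ produces the multipliers: $u,s,u_0 \ge 0$ with $\alpha = u^T A^N + s - u_0\pi$ and $u^T b - u_0\pi_0 = \beta_1 \ge \beta$, and $v,t,v_0 \ge 0$ with $\alpha = v^T A^N + t + v_0\pi$ and $v^T b + v_0(\pi_0+1) = \beta_2 \ge \beta$. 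To assemble a single CGLP solution the two values $\beta_1,\beta_2$ must be equalized to a common $\beta'$ satisfying $\beta \le \beta' \le \min(\beta_1,\beta_2)$: one replaces the larger certificate by a weaker dual-feasible solution attaining the smaller value, which is legitimate because a linear functional over the connected dual-feasible polyhedron sweeps out an interval of values. The resulting solution is CGLP-feasible with $\alpha^T\hat x - \beta' \le \alpha^T \hat x - \beta < 0$, so the optimal value is negative (indeed $-\infty$ after scaling along the cone), and the stated equivalence follows.

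The step I expect to be the main obstacle is exactly this reconciliation of the two expressions for $\beta$, together with the degenerate cases that must be handled alongside it: if $Q_1$ or $Q_2$ is empty the corresponding minimization is $+\infty$ and its dual is a Farkas direction, which must be allowed to carry the cut on its own; and one must invoke Balas's result that in the disjunctive setting $\mathrm{conv}(Q_1 \cup Q_2)$ is closed (so that $P^{(\pi,\pi_0)}$ should be read as the closed convex hull) in order for the separation argument and the identity ``valid for $P^{(\pi,\pi_0)}$'' $\Leftrightarrow$ ``membership'' to be exact. Once these points are dispatched, the theorem is a packaging of LP duality for a union of two polyhedra.
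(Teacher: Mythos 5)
First, a point of reference: the paper does not prove Theorem~\ref{thm:balas} at all --- it is quoted from \cite{Balas:98,Balas.Ceria.et.al:93}. The closest thing to an in-paper proof is the argument for Proposition~\ref{thm:compact_l-and-p}, which handles the analogous membership question purely primally: it writes $\hat x = \lambda x^1 + (1-\lambda)x^0$ with $x^0,x^1$ in the two disjuncts, sets $y=\lambda x^1$, and checks feasibility of \eqref{appa} directly, with a single application of Farkas/LP duality hidden in the passage to \eqref{MLP}. Your route is the dual one: characterize the valid inequalities of $\mathrm{conv}(Q_1\cup Q_2)$ by Farkas certificates and then separate. Your forward direction (feasible CGLP solutions give inequalities valid on $Q_1\cup Q_2$, hence $\alpha^T\hat x\ge\beta$ when $\hat x\in P^{(\pi,\pi_0)}$) is correct and complete, as is the observation that the feasible set is a cone containing $0$, so ``non-negative'' means ``bounded with value $0$.''

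The converse, however, has a genuine gap exactly where you predicted one. Having produced optimal dual multipliers for $\min\{\alpha^Tx : x\in Q_1\}$ and $\min\{\alpha^Tx:x\in Q_2\}$ with values $\beta_1\ge\beta_2\ge\beta$, you must exhibit a $Q_1$-certificate whose value is \emph{exactly} $\beta_2$ (the CGLP imposes the two expressions for $\beta$ as equalities, with no slack). Your justification --- that the dual objective ``sweeps out an interval of values'' over the connected dual polyhedron --- only shows the attained values form an interval with maximum $\beta_1$; it says nothing about how far down that interval reaches. The infimum of $u^Tb-u_0\pi_0$ over $\{(u,s,u_0)\ge 0: u^TA^N+s-u_0\pi=\alpha\}$ is, by duality, $\max\{\alpha^Ty : y\le 0,\ A'y\le b,\ \pi^Ty\ge\pi_0\}$, which is finite whenever that set is nonempty (e.g.\ $y=0$ works if $b\ge 0$ and $\pi_0\le 0$), so the interval can be bounded below and nothing in your argument rules out its lower end lying above $\beta_2$. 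The claim can in fact be rescued --- one shows $\alpha$ is nonnegative on the recession cone of $P$ (because it is valid for both nonempty disjuncts, whose recession cones cover $\mathrm{rec}(P)$), and that any $x\in Q_2$ and any $y$ in the set above differ by an element of $\mathrm{rec}(P)$, whence the infimum is at most $\beta_2$ --- but none of this is in your proof, and it is precisely the content that makes the equality-form CGLP work. The degenerate cases you flag (an empty disjunct, where one even needs to modify $\alpha$ since not every separating $\alpha$ admits a nonnegative certificate over the empty side) are likewise left unresolved. The standard proof, and the pattern of the paper's Proposition~\ref{thm:compact_l-and-p}, avoids the reconciliation entirely by expressing membership as feasibility of the lifted system in $(y^1,y^2,\lambda)$ and applying Farkas once; the two $\beta$-equalities then emerge together as the dual constraints of the single convexity variable, rather than having to be stitched together after the fact.
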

If the solution the cut generation linear program is negative, $\alpha^T x \ge \beta$ defines  a valid inequality in the space of structural variables that cuts off the point $\hat x$.

Lift-and-project cuts are separated by solving the cut generation linear program for simple disjunctive (i.e. with $\pi = e_k$). Since this program is unbounded whenever it has a negative solution, a {\em normalization constraint} is often added to cut the cone of feasible solution.
Following \cite{Balas.Perregaard:02}, the most commonly used constraint is $\sum_{i \in M} (u_i + v_i) + \sum_{i \in N} (s_i + t_i) + u_0 + v_0 = 1$. In the remainder, we denote by (CGLP) the cut generation linear program of Theorem \ref{thm:balas} augmented with that normalization condition.
\paragraph{The strengthened lift-and-project cut}
Lift-and-project cuts can be strengthened in a similar fashion as simple intersection cuts can be strengthened to GMI cuts.

Consider a basic solution $(\hat \alpha, \hat \beta, \hat u, \hat v, \hat s, \hat t, \hat u_0, \hat v_0)$ to (CGLP), the coefficients $\hat \alpha$ verify:
\begin{equation*}
\hat \alpha_k = \max\{\hat u A^k - \hat u_0, \hat v A^k + \hat v_0\}
\end{equation*}
and
\begin{equation*}
\hat \alpha_j = \max\{\hat u A^j, \hat v A^j \},\,\,\,j\neq k 
\end{equation*}

These coefficients can be strengthened by replacing $\hat \alpha_j$, for $j \in \{1,\ldots,p\} \setminus \{k\}$, with
\begin{equation*}
\overline \alpha_j := \min\{\hat u A^j - \hat u_0 \lfloor m_j \rfloor, \hat v A^j + \hat v_0 \lceil m_j \rceil \}
\end{equation*}
where
\begin{equation*}
m_j := \frac{\hat uA^j - \hat vA^j}{\hat u_0 + \hat v_0},
\end{equation*}
(see \cite{Balas.Jeroslow:80,Balas.Ceria.et.al:96*1} for justifications.)
\subsection{Equivalences and Closures}
\label{sec:closures}
Balas and Perregaard showed that simple intersection cuts are equivalent to lift-and-project cuts, and that GMI cuts are equivalent to strengthened lift-and-project cuts \cite{Balas.Perregaard:03}.
The equivalence can be stated as follows: every simple intersection cut (resp. GMI cut) from a basic solution (feasible or not) of \eqref{LP:std} can be derived as a lift-and-project cut (resp. strengthened lift-and-project cut) obtained from a basic feasible solution of (CGLP); and conversely every lift-and-project cut from a basic feasible solution of (CGLP) is an intersection cut.
Here similar means that after the two cuts are put into the same space by eliminating slack variables they are identical up to multiplication by a positive scalar. The equivalence gives the precise correspondence between the two bases of \eqref{LP:std} and (CGLP). In view of this result, solving (CGLP) can be interpreted as finding a good basis from which to generate a GMI cut.

We note that this result is stated in \cite{Balas.Perregaard:03} for 0-1 MILP only but its generalization to MILP is straightforward.

Generally speaking, the elementary closure of a family of cuts consists of all cuts in the family that can be obtained directly from the initial formulation.
A direct consequence of the result mentioned above is that lift-and-project cuts and simple intersection cuts on one hand and GMI cuts and strengthened lift-and-project cuts on the other hand are equivalent in terms of their closures \cite{Cornuejols.Li:01}.

The simple lift-and-project closure is obtained by taking all cuts defined by basic feasible solutions to (CGLP). We denote it by $\PSLP$. Geometrically, It is obtained by intersecting all polyhedra $P^{(e_k,\pi_0)}$ for all $k\in\{1,\ldots,p\}$ and all $\pi_0 \in \mathbb Z$:
\begin{equation*}
\PSLP := \bigcap\limits_{\substack{k\in\{1,\ldots,p\}\\ \pi_0 \in \mathbb Z}} P^{(e_k,\pi_0)}
\end{equation*}
Equivalently, $\PSLP$ is obtained by taking all simple intersection cuts from all bases (feasible and infeasible) of \eqref{LP:std}. 
We denote by $\mathcal B$ the set of all bases of $\eqref{LP:std}$. For each $B \in \mathcal B$, we define $P(B)$ the cone obtained by dropping the non-negativity requirements of the variables $j \not\in B$:
$$
P(B) := \{x \in \mathbb R^{n+m} : x_{B} = \overline b - \overline A^J x_J,\,\, x_J \ge 0 \}.
$$
The simple lift-and-project closure can be defined alternatively as
$$
\PSLP = \text{proj}_N \left( \bigcap\limits_{B \in \mathcal B} \bigcap\limits_{k \in N^I \cap B} \text{conv}\left(P\left(B\right) \cap \left\{x \in \mathbb R^{n+m} : x_{m+k} \le \lfloor \overline b_k \rfloor \vee x_{m+k} \le \lceil \overline b_k \rceil \right\}\right)\right).
$$
(where proj$_N$ denotes the projection onto the space of structural variables).

The lift-and-project closure is defined by taking all strengthened lift-and-project cuts obtained from all basic feasible solutions to (CGLP). We denote it by $\PLP$. It is more difficult to give a precise and intuitive geometrical definition of it than for $\PSLP$. By using the equivalence between lift-and-project cuts and GMI cuts, $\PLP$ is also obtained by taking all GMI cuts defined from all bases of \eqref{LP:std}. More formally, for each basis $B \in \mathcal B$ and $k \in N^I \cap B$, let 
$\Pi^B_k := \{ (\pi,\pi_0) \in \mathbb Z^{n+1}: \pi_k = 1, \pi_{N\setminus N^I \cup B\setminus \{k\}} = 0,\, \pi_0 = \lfloor \overline b_k \rfloor \}.$ $\Pi^B_k$ be the subset of splits that are considered for strengthening the simple intersection cut from basis $B$ and variable $k$ (i.e. all valid splits which have zero coefficients for all basic variables except one). The lift-and-project closure is the intersection of all strengthened intersections cut obtained from these larger subset of splits:
$$
\PLP := \text{proj}_ N \left( \bigcap\limits_{B \in \mathcal B} \bigcap\limits_{k \in N^I \cap B}  \bigcap\limits_{(\pi,\pi_0) \in \Pi^B_k} \text{conv}\left(P\left(B\right) \cap \left\{x \in \mathbb R^{n+m} : \pi^T x \le \pi_0 \vee \pi^T x \ge \pi_0 + 1 \right\}\right)\right).
$$

Since both $\PSLP$ and $\PLP$ are defined by a finite number of inequalities both are clearly polyhedra. Moreover, $\PSLP$ can be optimized in polynomial time. To the best of our knowledge, it is not known if $\PLP$ can be optimized in polynomial time.

Finally, let us remind that the split closure $P_S$ is obtained by intersecting all split cuts:
$$
P_S :=  \bigcap\limits_{\substack{(\pi,\pi_0)\in \mathbb Z^{n} \times \mathbb Z \\ \pi_{\{p+1,\ldots,n\}}=0}} P^{(\pi,\pi_0)}.
$$
As shown by \cite{cook.kannan.schrijver:90}, $P_S$ is a polyhedron. Andersen, Cornu\'ejols and Li \cite{andersen:05} have shown that $P_S$ can also be defined in terms of intersection cuts:
$$
P_S = \text{proj}_N \left( \bigcap\limits_{B \in \mathcal B} \bigcap\limits_{\substack{(\pi,\pi_0)\in \mathbb Z^{n} \times \mathbb Z \\ \pi_{\{p+1,\ldots,n\}}=0}} \text{conv}\left(P\left(B\right) \cap \left\{x \in \mathbb R^{n+m} : \pi^T x \le \pi_0 \vee \pi^T x \ge \pi_0 + 1 \right\}\right)\right).
$$
 From the definitions, it should be clear that $P_S \subseteq \PLP \subseteq \PLP$, since $\PLP$ and $\PSLP$ only use a subset of splits and elementary disjunctions belong to $\Pi^B_k$. It is NP-Hard to optimize over $P_S$. Computational studies such as \cite{Balas.Saxena:08,Dash.Gunluk.Lodi:10} have shown that it can be a strong relaxation, but computational times are often prohibitive (in the two studies computing $P_S$ often takes more time than solving \ref{MILP} to optimality). This justifies our motivation to optimize $\PLP$ and $\PSLP$.

\section{The membership LP}
\label{sec:membership}
In this section, we establish a linear program which given a split disjunction $(\pi,\pi_0)$ and a point $\hat x \in \mathbb R^n$ answers the question of whether $\hat x \in P^{(\pi,\pi_0)}$. It should appear immediately that this question can already be answered by solving (CGLP) using Theorem \ref{thm:balas}. Our linear program is indeed similar to (CGLP), by duality, but it does not require the introduction of new variables and is formulated in the same space as \eqref{LP}. We will mostly use the results of this section in the context of simple disjunctive (i.e. when $\pi = e_k$), but since they are not restricted to simple disjunctive, we present them in the more general context of split disjunctive.

Our membership LP is based on the following proposition.
\begin{prop}
\label{thm:compact_l-and-p}
Let $P=\{x \in \mathbb R^n_+ : A'x \ge b\}$, $\hat x \in P$ and $(\pi_0,\pi) \in \mathbb Z^{n+1}$ be such that
$\pi_0 < \pi^T \hat x < \pi_0 +1$.
$\hat x \in \text{conv} \left(P \cap \left( \left\{\pi^T x \le \pi_0 \right\} \cup \left\{\pi^T x \ge \pi_0 + 1 \right\} \right) \right)$
if and only if there exists $y \in \mathbb R^{n}$ such that
\begin{equation}
\label{appa}
\begin{aligned}
&\pi^T y - (\pi_0 + 1) \left( \pi \hat x - \pi_0 \right) \ge 0\\
&0 \le A' y - b \left( \pi \hat x - \pi_0 \right) \le A' \hat x - b\\
&0 \le y \le \hat x \\
\end{aligned}
\end{equation}
\end{prop}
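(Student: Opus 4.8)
The plan is to read this proposition as a specialization of Balas's characterization of the convex hull of a union of polyhedra (Theorem~\ref{thm:balas}), applied to the two polyhedra $P_1 := P \cap \{x : \pi^T x \le \pi_0\}$ and $P_2 := P \cap \{x : \pi^T x \ge \pi_0 + 1\}$. I would use the elementary fact that the convex hull of the union of two convex sets is exactly the union of all segments joining a point of one to a point of the other, so that $\hat x \in \text{conv}(P_1 \cup P_2)$ is equivalent to the existence of $z^1 \in P_1$, $z^2 \in P_2$ and $\lambda \in [0,1]$ with $\hat x = (1-\lambda) z^1 + \lambda z^2$. Writing $\delta := \pi^T \hat x - \pi_0$, the hypothesis $\pi_0 < \pi^T\hat x < \pi_0 + 1$ gives $\delta \in (0,1)$ and forces both $P_1, P_2$ nonempty and $\lambda \in (0,1)$. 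The whole argument then reduces to matching the scaled point $y := \lambda z^2$ against the three lines of \eqref{appa}.

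For the reverse direction (existence of $y$ implies membership), which is the routine one, I would set $\lambda := \delta$, $z^2 := y/\delta$ and $z^1 := (\hat x - y)/(1-\delta)$. Dividing the inequalities of \eqref{appa} by the positive scalars $\delta$ and $1-\delta$ directly yields $A' z^2 \ge b$, $\pi^T z^2 \ge \pi_0 + 1$ and $z^2 \ge 0$ (so $z^2 \in P_2$); and, after rewriting the upper bound of \eqref{appa} as $A'\hat x - A'y \ge b(1-\delta)$ and the first line as $\pi^T(\hat x - y) \le \pi_0(1-\delta)$, the relations $A'z^1 \ge b$, $\pi^T z^1 \le \pi_0$ and $z^1 \ge 0$ (so $z^1 \in P_1$). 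By construction $\hat x = (1-\delta)z^1 + \delta z^2$ is then a genuine convex combination.

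The delicate direction is the forward one, and the main obstacle is that a generic segment decomposition of $\hat x$ comes with an \emph{arbitrary} weight $\lambda$, whereas \eqref{appa} secretly encodes the single value $\lambda = \delta$. To pin it down I would exploit that $\pi^T$ is affine and strictly increasing along the segment $t \mapsto z(t) := (1-t)z^1 + tz^2$ (its slope $\pi^T z^2 - \pi^T z^1 \ge 1$): since $\pi^T z^1 \le \pi_0 < \pi_0 + 1 \le \pi^T z^2$, there are parameters $t_1 < t_2$ in $[0,1]$ with $\pi^T z(t_1) = \pi_0$ and $\pi^T z(t_2) = \pi_0 + 1$, and both $z(t_1) \in P_1$ and $z(t_2) \in P_2$ because each is a convex combination of points of $P$ satisfying the relevant half of the split. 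Replacing $(z^1, z^2)$ by the tightened pair $(z(t_1), z(t_2))$ and recomputing the weight $\mu$ of $\hat x = (1-\mu)z(t_1) + \mu z(t_2)$ gives $\pi^T\hat x = (1-\mu)\pi_0 + \mu(\pi_0+1)$, that is $\mu = \delta$.

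Setting $y := \delta\, z(t_2)$ then makes Balas's two disjunct-specific $\pi$-inequalities collapse into the single first line of \eqref{appa}, while $\hat x - y = (1-\delta)z(t_1)$; the remaining bounds $0 \le A'y - b\delta \le A'\hat x - b$ and $0 \le y \le \hat x$ follow by substituting $z(t_1), z(t_2) \in P$ and scaling by $\delta$ and $1-\delta$. I expect this tightening/normalization step to be the only real content of the proof; everything else is substitution and sign bookkeeping, and the estimates can be arranged so as never to divide by or rely on the sign of $\pi_0$ or $\pi_0 + 1$, only on the positivity of $\delta$ and $1-\delta$.
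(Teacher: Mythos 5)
Your proposal is correct and follows essentially the same route as the paper: decompose $\hat x$ as a convex combination of points $z^1\in P\cap\{\pi^Tx\le\pi_0\}$ and $z^2\in P\cap\{\pi^Tx\ge\pi_0+1\}$, normalize the endpoints onto the hyperplanes $\pi^Tx=\pi_0$ and $\pi^Tx=\pi_0+1$ so that the weight is forced to equal $\pi^T\hat x-\pi_0$, and identify $y$ with the scaled point $(\pi^T\hat x-\pi_0)\,z^2$, reversing the substitution for the converse. The only difference is that you spell out, via the segment parametrization $z(t)$, the tightening step that the paper dismisses with ``without loss of generality we can assume $\pi^Tx^0=\pi_0$, $\pi^Tx^1=\pi_0+1$''.
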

This proposition can be proved in different ways. In particular, it is closely related to classical results of disjunctive programming. We will discuss in the following paragraphs the connections between $\eqref{appa}$ and (CGLP). First, we give a short and self contained proof of it since it provides a simple way to understand \eqref{appa}.
\begin{proof}
%We denote by $X^{(\pi_0,\pi)} = \text{conv} \left(X \cap \left( \left\{\pi^T x \le \pi_0 \right\} \cup \left\{\pi^T x \ge \pi_0 + 1 \right\} \right) \right)$
\begin{itemize}
\item [{\em (i)}] Suppose that $\hat x \in P^{(\pi_0, \pi)}$. Since $\pi^T \hat x - \pi_0 \in ] 0,1[$, there exists $x^0 \in P \cap \left\{\pi^T x \le \pi_0 \right\}$,
$x^1 \in P \cap \left\{\pi^T x \ge \pi_0 + 1\right\}$ and $\lambda \in ] 0,1 [$ such that $\hat x = \lambda x^1 + (1-\lambda) x^0$. 
Without loss of generality, we can assume that $\pi^T x^0 = \pi_0$, $\pi^T x^1 = \pi_0 + 1$. Then, since $\pi^T (\lambda x^1 + (1-\lambda) x^0) = \pi^T \hat x$, we have
$$
\lambda = \pi^T \hat x - \pi_0.
$$
We take $\overline y = \lambda x^1$ and verify that it satisfies \eqref{appa}.

First, $\pi^T \overline y - (\pi_0 + 1) (\pi^T \hat x - \pi_0) = \lambda(\pi_0 + 1) - (\pi_0 + 1) \lambda = 0$.
Also, $A \overline y - b (\pi^T \hat x - \pi_0) = \left( A x^1 - b \right) \lambda \ge 0$ and $\overline y = \lambda x^1 \ge 0$, since $Ax^1 \ge b$ , $x^1 \ge 0$ and $\lambda \ge 0$. Furthermore, since $\hat x = \overline y + (1 - \lambda) x^0$ we have:
$$
A \overline y - \lambda b + (1-\lambda) (A x^0 - b) = A \hat x - b.
$$
and therefore, since $A x^0 - b \ge 0$ and $\lambda \le 1$
$$
A \overline y - \lambda b \le A \hat x - b.
$$
$y \le \hat x$ in the same manner.

\item[{\em (ii)}] Suppose now that $\overline y$ satisfies \eqref{appa}. Let $\lambda = \pi \hat x - \pi_0$. Note that $0 < \lambda < 1$ by hypothesis. We take $ x^1 = \frac{\overline y}{\lambda}$
and $ x^0 = \frac{\hat x -\overline y}{1 - \lambda}$. Clearly $\hat x = \lambda  x^1 + (1 - \lambda)  x^0$.
Furthermore $ x^1 \in P$ and $\pi^T x^1 \ge \pi_0 + 1$.\\

It remains to show that $x^0 \in P \cap \{\pi^T x \le \pi_0\}$. $x^0 \in P$, since $(1 - \lambda)(A x^0 - b) = A (\hat x -\overline y) - b + b \lambda = A \hat x - b - A \overline y + b \lambda \ge 0$ and $(1 - \lambda) x^0 = \hat x -\overline y \ge 0$. Finally
$$\pi^T x^0 - \pi_0 = \frac{\pi^T \hat x -\pi^T \overline y - (1 - \lambda) \pi_0}{1 - \lambda} = \frac{-\pi^T \overline y + \lambda + \lambda \pi_0}{1 -\lambda}
= -\frac{\pi^T \overline y - (\pi_0 + 1)\lambda}{1 - \lambda} \le 0 
$$
\end{itemize}
\end{proof}
By applying Proposition \ref{thm:compact_l-and-p}, we can formulate a linear program which answers the question of membership for $\hat x$ and $P^{(\pi_0,\pi)}$:
\begin{equation*}
\label{MLP}
\begin{aligned}
\max \,\,&\pi^T y - (\pi_0 + 1) \left( \pi \hat x - \pi_0 \right)\\
&\text{s.t.:}\\
&0 \le A'y - b \left( \pi \hat x - \pi_0 \right) \le A'\hat x - b,\\
&0 \le y \le \hat x.
\end{aligned}
\tag{MLP}
\end{equation*}
As a direct application of the proposition, $\hat x \in P^{(\pi_0,\pi)}$ if and only if the optimal solution to \eqref{MLP} is non-negative.

An alternative way to prove the theorem is to see that \eqref{MLP} is closely related to the dual of (CGLP) with the normalization condition $u_0 + v_0 = 1$:
\begin{equation*}
\label{CGLPb}
\begin{aligned}
&\min \alpha^T \hat x - \beta \\
&\text{s.t.:}\\
&\alpha = u^T A' + s - u_0 \pi,\\
&\alpha = v^T A' + t + v_0 \pi,\\
&\beta = u^T b - \pi_0 u_0,\\
&\beta = v^T b + (\pi_0 + 1) v_0,\\
&u_0 + v_0 = 1,\\
&u,v \in \mathbb R^{m}_+,\, s,\, t \in \mathbb R^n_+,\, u_0, v_0 \in \mathbb R_+\\
\end{aligned}
\tag{CGLP'}
\end{equation*}
A basic results stated in \cite{Balas.Perregaard:03} (Lemma 1) is that in any solution of (CGLP) that yields an inequality $\alpha^T x \ge \beta$ that is not dominated by the constraint of \eqref{LP}, both $u_0$ and $v_0$ are positive. The following lemma shows that in fact, if we assume $\hat x \in P$ and $\pi_0 < \pi^T \hat x < \pi_0 +1$, the conditions $u_0 , v_0 \ge 0$ can be dropped from \eqref{CGLPb}.

\begin{lem}
\label{lem:bp}
Let $\hat x \in P$ be such that $\pi_0 < \pi^T \hat x < \pi_0 +1$. Suppose that $(\alpha, \beta, u,v,u_0,v_0)$ verifies the following system of equations:
\begin{equation*}
\begin{aligned}
&\alpha = u^T A' + s - u_0 \pi,\\
&\alpha = v^T A' + t + v_0 \pi,\\
&\beta = u^T b - \pi_0 u_0,\\
&\beta = v^T b + (\pi_0 + 1) v_0,\\
&u_0 + v_0 = 1,\\
&u,v \in \mathbb R^{m}_+,\, s,\, t \in \mathbb R^n_+,\, u_0, v_0 \in \mathbb R\\
\end{aligned}
\end{equation*}
with $u_0 \le 0$ or $v_0 \le 0$. Then $\alpha^T \hat x \ge \beta$.
\end{lem}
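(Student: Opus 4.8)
The plan is to exploit the two alternative expressions that the given system provides for the affine functional $\alpha^T x - \beta$, and to observe that at $\hat x$ exactly one of them is manifestly nonnegative once the sign of $u_0$ (or of $v_0$) is fixed. Since the first two equations express $\alpha$ both through the multiplier $u$ and through the multiplier $v$ (and likewise the third and fourth for $\beta$), I can evaluate $\alpha^T\hat x-\beta$ in two ways and keep whichever representation is convenient. The normalization $u_0+v_0=1$ forbids both $u_0$ and $v_0$ from being $\le 0$, so I treat the two cases $v_0\le 0$ and $u_0\le 0$ separately, and they exhaust the hypothesis.

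First I would suppose $v_0\le 0$ (so that $u_0=1-v_0\ge 1$). Substituting $\alpha=v^TA'+t+v_0\pi$ and $\beta=v^Tb+(\pi_0+1)v_0$ and rearranging gives
\[
\alpha^T\hat x-\beta = v^T(A'\hat x-b) + t^T\hat x + v_0\bigl(\pi^T\hat x-(\pi_0+1)\bigr).
\]
Each term is nonnegative: $v\ge 0$ together with $A'\hat x\ge b$ handles the first, $t\ge 0$ together with $\hat x\ge 0$ handles the second, and the third is a product of two nonpositive numbers, since $v_0\le 0$ while $\pi^T\hat x-(\pi_0+1)<0$ by the hypothesis $\pi^T\hat x<\pi_0+1$. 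Hence $\alpha^T\hat x\ge\beta$.

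Then I would treat the symmetric case $u_0\le 0$ using the other representation $\alpha=u^TA'+s-u_0\pi$, $\beta=u^Tb-\pi_0u_0$, which yields
\[
\alpha^T\hat x-\beta = u^T(A'\hat x-b) + s^T\hat x - u_0\bigl(\pi^T\hat x-\pi_0\bigr).
\]
The same bookkeeping applies: the first two terms are nonnegative as before, and now $-u_0\ge 0$ multiplies $\pi^T\hat x-\pi_0>0$ (using the lower strict inequality $\pi_0<\pi^T\hat x$), so the last term is nonnegative as well, giving $\alpha^T\hat x\ge\beta$ again.

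There is essentially no hard step here; the entire content is in choosing, for the given sign pattern, the representation in which the disjunctive multiplier $u_0$ or $v_0$ enters with a favorable sign, and then reading off the signs from $\hat x\in P$ and from the strict containment $\pi_0<\pi^T\hat x<\pi_0+1$. The only point requiring care is to invoke the correct one of the two strict inequalities in each case — the upper bound $\pi^T\hat x<\pi_0+1$ when $v_0\le 0$, the lower bound $\pi_0<\pi^T\hat x$ when $u_0\le 0$ — and to note that $u_0+v_0=1$ guarantees these two cases cover the whole hypothesis.
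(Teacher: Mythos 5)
Your proof is correct and follows essentially the same route as the paper: expand $\alpha^T\hat x-\beta$ using the representation associated with the nonpositive multiplier and read off the sign of each term from $\hat x\in P$ and the strict containment $\pi_0<\pi^T\hat x<\pi_0+1$. The paper writes out only the case $u_0\le 0$ and declares the other symmetric; you spell out both, and you also correctly retain the $s^T\hat x$ term that the paper's displayed identity silently drops.
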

\begin{proof}
The proof if similar for $u_0 \le 0$ and $v_0 \le 0$; we only consider the case $u_0 \le 0$.

Let $(\alpha, \beta, u,v,u_0,v_0)$ be a vector that satisfies the conditions of the statement. Then, by definition of $(\alpha, \beta, u,v,u_0,v_0)$
$$\alpha^T \hat x - \beta = (u^T A' + s - u_0 \pi) \hat x - u^T b + \pi_0 u_0 = u^T (A' \hat x - b) - u_0 (\pi^T \hat x - \pi_0).$$
Now the last quantity is non-negative since $u \ge 0$, $A' \hat x - b$, $\pi^T \hat x > \pi_0$ and $u_0 \le 0$.
\end{proof}
Since we are only interested in negative solutions to \eqref{CGLPb}, by application of the lemma, we can therefore remove the conditions $u_0,v_0 \ge 0$. 

After removing non-negativity constraints on $u_0$ and $v_0$, we can eliminate $\alpha$, $\beta$ and $v_0$ from \eqref{CGLPb}:
\begin{equation*}
\begin{aligned}
&\min (u^T A') \hat x + (s - u_0 \pi)^T \hat x - u^T b - \pi_0 u_0,\\
&\text{s.t.:}\\
&(u-v)^T A' + (s - t) = \pi\\
&(v-u)^T b  - u_0 = - \pi_0 -1 \\
&u,v \in \mathbb R^{m}_+,\, s,\, t \in \mathbb R^n_+,\, u_0 \in \mathbb R\\
\end{aligned}
\end{equation*}
Finally we eliminate $u_0$:
\begin{equation}
\label{CGLP:simple}
\begin{aligned}
&\min u^T (A \hat x -b) + s^T \hat x + (\pi^T \hat x - \pi_0)
\left( (u-v)^T b  - \pi_0 - 1 \right),\\
&\text{s.t.:}\\
&(u-v)^T A + (s - t) = \pi\\
&u,v \in \mathbb R^{m}_+,\, s,\, t \in \mathbb R^n_+.\\
\end{aligned}
\end{equation}
Taking the dual, with multipliers $y$ associated to the constraints, one obtains \eqref{MLP}. It should be clear from this development that a dual feasible solution to \eqref{MLP} with negative dual cost gives rise to a valid cut for $P^{(\pi, \pi_0)}$.

\begin{remark}
\label{remark:non_valid}
Note that one should be careful when dropping the non-negativity conditions on $u_0$ and $v_0$ in \eqref{CGLPb}. This has indeed one important consequence: while non-negative solutions to \eqref{CGLPb} define valid (non-violated) inequalities, non-negative solution to \eqref{CGLP:simple} do not. For example, the trivial solution $u=v=0$, $s-t=\pi$ is always feasible for \eqref{CGLP:simple} but the inequality it defines is not necessarily valid. For example take $\pi = e_k$ and $\pi_0 = 1$, the solution $u=v=0$, $t=0$, $s=e_k$ defines the inequality $x_k \le 2$ which certainly can not be valid for all integer programs! Nevertheless if the solution is negative, it gives a valid cut.
\end{remark}

%The advantage of \eqref{MLP} is that it is formulated in the same space as \eqref{LP}, it is smaller in number of variable than \eqref{CGLPb}, and its structure is similar to \eqref{LP} (the constraint matrix is the same, only the right-hand-sides, bounds and objective functions are changed).

\eqref{CGLPb} (with that normalization condition) has been studied previously in \cite{Balas.Ceria.et.al:93,Balas.Perregaard:02,Fischetti.Lodi.Tramontani:10}.
A well known property of the normalization condition $u_0 + v_0 =1$ is that if $\hat x$ is an extreme point of $P$ such that $0 < \hat x_k - \pi_0 < 1$ then an optimal solution to \eqref{CGLPb} with $\pi = e_k$ is the intersection cut obtained from a basis defining $\hat x$ and the corresponding strengthened cut is the GMI cut from the same basis (see \cite{Fischetti.Lodi.Tramontani:10} for a precise statement and a complete proof). In the next proposition, we re-state the property in the context of \eqref{MLP}. Looking at \eqref{MLP}, we are able to give a slightly more precise view: if $\hat x$ is an extreme point, \eqref{MLP} has a unique solution. We present a proof of the results in the context of \eqref{MLP} since it is short and gives insights into the connections between \eqref{LP} and \eqref{MLP} (the fact that the associated cut is the intersection cut will follow directly from the results of the next section, we delay the proof until then).

\begin{prop}
\label{lemma:extr}
Let $P=\{x \in \mathbb R^n_+ : Ax \ge b\}$, and $\hat x$ be an extreme point of $P$ such that
$0 < \pi^T \hat x - \pi_0 < 1$, then the unique solution to 
\begin{equation}
\label{mp:m}
\begin{aligned}
\max \,\,&\pi^T y - (\pi_0 + 1) (\pi \hat x - \pi_0)\\
&\text{s.t.:}\\
&0 \le A'y - b(\pi \hat x - \pi_0) \le A'\hat x - b\\
&0 \le y \le \hat x \\
\end{aligned}
\end{equation}
is $\overline y = \hat x(\pi^T \hat x - \pi_0)$.
Furthermore $\pi^T \overline y - (\pi_0 + 1) (\pi \hat x - \pi_0) < 0$.
\end{prop}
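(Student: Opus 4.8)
The plan is to show that the feasible region of \eqref{mp:m} collapses to the single point $\overline y = (\pi^T\hat x - \pi_0)\hat x$; once this is established, uniqueness is immediate and the sign of the objective is a one-line computation. To ease notation I would set $\lambda := \pi^T\hat x - \pi_0$, so that $\lambda\in(0,1)$ by hypothesis.

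First I would check that $\overline y = \lambda\hat x$ is feasible. We have $A'\overline y - b\lambda = \lambda(A'\hat x - b)$, which lies between $0$ and $A'\hat x - b$ because $0\le\lambda\le 1$ and $A'\hat x - b\ge 0$; and $0\le\lambda\hat x\le\hat x$ follows from $0\le\lambda\le1$ and $\hat x\ge 0$. This is routine.

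The heart of the argument is to exploit that $\hat x$ is an extreme point of $P$. The key observation is that every constraint of $P$ that is tight at $\hat x$ forces an \emph{equality} on $y$ in \eqref{mp:m}. Indeed, if a row is tight, $A'_i\hat x = b_i$, then the sandwich $0\le A'_i y - b_i\lambda\le A'_i\hat x - b_i = 0$ forces $A'_i y = b_i\lambda$; and if a bound is tight, $\hat x_j = 0$, then $0\le y_j\le\hat x_j = 0$ forces $y_j = 0$. Since $\hat x$ is an extreme point, the active rows $A'_i$ together with the unit vectors $e_j$ indexing the active bounds span $\mathbb R^n$, so these forced equalities form a linear system in $y$ of full column rank $n$, which has at most one solution. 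As $\overline y = \lambda\hat x$ plainly satisfies $A'_i(\lambda\hat x) = \lambda b_i$ and $(\lambda\hat x)_j = 0$, it \emph{is} that unique solution. Hence every feasible point of \eqref{mp:m} equals $\overline y$, proving uniqueness.

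Finally, evaluating the objective at $\overline y$ gives $\pi^T\overline y - (\pi_0+1)\lambda = \lambda\pi^T\hat x - (\pi_0+1)\lambda = \lambda(\lambda - 1) < 0$ because $\lambda\in(0,1)$, which is the ``furthermore'' claim. The step I expect to require the most care is the passage from ``tight constraint of $P$'' to ``forced equality in the membership LP,'' together with the invocation of the extreme-point rank condition to conclude that these equalities pin down $y$ uniquely; the feasibility verification and the objective computation are mechanical by comparison.
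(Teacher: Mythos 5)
Your proof is correct and follows essentially the same route as the paper: verify feasibility of $\overline y = \lambda\hat x$, observe that the constraints of $P$ active at $\hat x$ force the corresponding scaled equalities on $y$, and invoke the rank-$n$ characterization of extreme points to pin $y$ down uniquely, with the sign of the objective being the product $\lambda(\lambda-1)<0$. Your write-up is in fact slightly more explicit than the paper's on why the forced equality system has at most one solution.
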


\begin{proof}
First, we prove that $\overline y$ is a solution. Since $0 < \pi \hat x - \pi_0 < 1$, and $A \hat x - b \ge 0$:
$$
0 \le (\pi \hat x - \pi_0)(A \hat x - b) \le A \hat x -b
$$
and also $0 \le(\pi \hat x - \pi_0) \hat x \le \hat x$.\\

Second, we show that this solution is unique. We denote by $A^= x = b^= , \, x^= = 0$, the subset of the inequalities defining $P$ satisfied at equality by $\hat x$. Since $\hat x$ is an extreme point of $P$, it is the unique solution of $A^= x = b^= , \, x^= = 0$. A solution of \eqref{mp:m} satisfies:
$$
\begin{aligned}
A^= y - b^= (\pi^T x - \pi_0) = 0\\
y^= = 0.
\end{aligned}
$$
This system in turn has a unique solution.

Finally, it is trivial to check that $\pi^T  \overline y - (\pi_0 + 1) (\pi \hat x - \pi_0) < 0$:
$$
\pi^T  \overline y - (\pi_0 + 1) (\pi \hat x - \pi_0) = (\pi \hat x - \pi_0 - 1) (\pi \hat x - \pi_0)
$$
which is the product of a positive and a negative number and therefore always negative.
\end{proof}

This Lemma has important consequences for the practical usefulness of \eqref{MLP}. In particular, it exposes its main weakness with respect to classical lift-and-project cut generation LPs with more general normalization condition such as the one used in \cite{Balas.Perregaard:03}.

The standard way to apply lift-and-project cuts (and GMI cuts), is to generate them recursively by rounds where at each round the current basic optimal solution to the LP relaxation (computed by the simplex algorithm) is cut by separating one lift-and-project cut for each basic fractional integer constrained variable. At the end of the round all the lift-and-project cuts generated are added to the LP relaxation and the process is iterated recursively. In such an algorithm, using \eqref{MLP} to generate cuts brings little novelty since the point to cut is always an extreme point of the polyhedron used to separate and the cuts generated would therefore be GMI cuts (and they could certainly be read more quickly from the optimal LP tableau).

Nevertheless, there are contexts where solving \eqref{MLP} could present a practical interest. In particular one can not always assume that $\hat x$ is an extreme point of $P$. This is the case, for example, if the objective function in \eqref{MILP} is replaced with a nonlinear convex function, or if the LP relaxation is solved with an interior point method. In this work, we will focus on another case of interest: generating cuts of lowest rank. If in the procedure outlined above, one does not use the previously generated cuts to generate new cuts, then the point to cut is not an extreme point of the polyhedron used to separate.

\section{Correspondences and equivalences}
Our goal in this section is to relate the cuts that can be obtained from the solution of \eqref{MLP} with intersection and GMI cuts.
We consider only elementary disjunctions of the form $x_k \le \pi_0 \vee x_k \ge \pi_0 + 1$. Note that this assumption is not restrictive since any split $\pi^T x \le \pi_0 \vee \pi^T x \ge \pi_0 + 1$ can always be put into this form by defining a new variable to be equal to $\pi^T x$.
For this elementary disjunction, the membership LP is
\begin{equation*}
\label{MP:sd}
\begin{aligned}
\max \,\,&y_k - \lceil \hat x_k \rceil (\hat x_k - \lfloor \hat x_k \rfloor ),\\
&\text{s.t.:}\\
&0 \le A'y - b (\hat x_k - \lfloor \hat x_k \rfloor ) \le A'\hat x - b,\\
&0 \le y \le \hat x,\\
&y \in \mathbb R^n.
\end{aligned}
\tag{MLP'$_k$}
\end{equation*}
For ease of notation, in the sequel, we denote $(\hat x_k - \lfloor \hat x_k \rfloor )$ by $f_k$.

We are interested in dual feasible solutions of \eqref{MP:sd} since they define cuts for our \eqref{MILP}.
The dual of \eqref{MP:sd} is the Cut Generation Linear Program
\begin{equation*}
\label{MP:dual}
\begin{aligned}
&\min u^T (A' \hat x -b) + s^T \hat x + (u-v)^T b  f_k - \lceil \hat x_k \rceil f_k\\
&\text{s.t.:}\\
&(u-v)^T A + (s - t) = e_k\\
&u,v \in \mathbb R^{m}_+,\, s,\, t \in \mathbb R^n_+.\\
\end{aligned}
\tag{CGLP'$_k$}
\end{equation*}

A solution $(u,v,s,t)$ to \eqref{MP:dual} defines a cut of the form:
\begin{equation}
\label{cut:lp}
u^T A' x + s^T x + \left( \left(u-v\right)^T b - \lceil \hat x_k \rceil  \right) x_k \ge u^T b + \left( \left(u-v\right)^T b  - \lceil \hat x_k \rceil \right) \lfloor  \hat x_k \rfloor 
\end{equation}
in the space of structural variables.

Finally, \eqref{MP:sd} in standard form is
\begin{equation*}
\label{MP:std}
\begin{aligned}
\max \,\,&y_{m+k} - \lceil \hat x_k \rceil f_k\\
&\text{s.t.:}\\
&A y = A'\hat x - b + b f_k,\\
&0 \le y_N \le \hat x_N,\\
&0 \le y_M \le A'\hat x_N - b,\\
&y \in \mathbb R^n+m.
\end{aligned}
\tag{SMLP$_k$}
\end{equation*}
\eqref{MP:std} is in the form of a general LP with lower and upper bounds on the variables. In a basic solution, we denote by
$J^+$ the set of variables which are nonbasic at their upper bound (corresponding to variables $u$ and $s$ in the dual) and $J^-$ the set of variables which are nonbasic at their lower bound (corresponding to variables $v$ and $t$). We recall that a basic solution of \eqref{MP:std} is dual feasible if the reduced costs of every variables in $J^-$ is non-positive and the reduced cost of all variables in $J^+$ is non-negative.

An immediate property of \eqref{MP:std}, is that all of its bases are bases of \eqref{LP:std}.

\begin{prop}
The variables indexed by $B$ form a base of \eqref{MP:std} if and only if they form a base of \eqref{LP:std}.
\end{prop}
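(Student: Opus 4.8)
The plan is to reduce the statement to a single structural observation: for a linear program in standard form, the collection of index sets that constitute a basis is determined solely by the constraint matrix, and is entirely independent of the objective function, the right-hand side, and any bounds imposed on the variables. As recalled in Section~2, given an index set $B$ with $|B| = m$, the variables indexed by $B$ form a basis precisely when the column submatrix $A^B$ is non-singular. The whole argument will hinge on the fact that \eqref{LP:std} and \eqref{MP:std} share the very same constraint matrix.

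First I would point out that both \eqref{LP:std} and \eqref{MP:std} are built on the identical matrix $A=(-I\,A')$: the two programs differ only in their objective, their right-hand side ($b$ versus $A'\hat x - b + b f_k$), and the presence of upper bounds on the variables in \eqref{MP:std}. None of these three features enters the definition of a basis. Although \eqref{MP:std} is a bounded-variable program, a basis in the bounded-variable simplex framework is still a partition of the $n+m$ columns into $m$ basic columns, indexed by $B$ and forming a non-singular submatrix $A^B$, together with the remaining nonbasic columns each fixed at one of its two bounds. The bounds and the right-hand side serve only to determine the numerical values of the resulting basic solution; they place no restriction on which index sets $B$ are admissible as bases.

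It then follows immediately that $B$ forms a basis of \eqref{MP:std} if and only if $A^B$ is non-singular, which holds if and only if $B$ forms a basis of \eqref{LP:std}. Since the characterizing condition $\det A^B \neq 0$ is literally the same in both cases, the asserted equivalence is obtained at once, and both directions of the biconditional are handled simultaneously. There is no substantive obstacle here; the only point deserving a word of care is to confirm that the bounded-variable notion of basis invoked for \eqref{MP:std} rests on exactly the same non-singularity requirement on $A^B$ as the ordinary standard-form notion used for \eqref{LP:std} — which, as just noted, it does.
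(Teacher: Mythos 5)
Your argument is correct and is essentially the paper's own proof, which simply observes that the two programs share the same constraint matrix so the statement holds by definition of a basis. Your additional remark that the bounded-variable notion of basis rests on the same non-singularity condition on $A^B$ is a reasonable elaboration of the same point.
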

\begin{proof}
Since both problems have the same matrix of constraints this is true by definition of bases.
\end{proof}

We now turn to the main result of this section which is to characterize the dual feasible bases of \eqref{MP:std} and to relate them to intersection cuts for \eqref{LP:std}. First, in the next lemma, we characterize the dual feasible bases of \eqref{MP:std}.

\begin{lem}
\label{thm:equiv}
Let $B=\{i_1,\ldots,i_m\}$ be the index set of a basis of \eqref{MP:std}. $B$ is dual feasible if either: 
\begin{itemize}
\item[{\em(i)}] the variable $y_{m+k}$ is non-basic at its upper bound, or
\item[{\em(ii)}] $y_{m+k}$ is basic with corresponding tableau row $y_{m+k} + \sum_{j \in J} \overline a_{ij} y_j = \overline a_{i0}$ in \eqref{MP:std} and $j \in J$ is non-basic at its upper bound if $\overline a_{ij} < 0$ and nonbasic at its lower bound if $\overline a_{ij} > 0$.
\end{itemize}
\end{lem}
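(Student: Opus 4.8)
The plan is to read the dual-feasibility conditions directly off the reduced costs of the basis $B$, exploiting the fact that the objective of \eqref{MP:std} depends on only one variable. Dropping the additive constant $-\lceil \hat x_k \rceil f_k$, which is irrelevant for optimization, the objective is simply $y_{m+k}$, so the cost vector is $c = e_{m+k}$ and $c_j = 0$ for every $j \neq m+k$. Recall that the reduced cost of a nonbasic variable $j$ is $\overline c_j = c_j - c_B^T {A^B}^{-1} A^j$, and (as recalled just before the statement) $B$ is dual feasible exactly when $\overline c_j \le 0$ for each $j \in J^-$ and $\overline c_j \ge 0$ for each $j \in J^+$. The entire argument is then a two-case computation of $c_B$, according to whether $y_{m+k}$ is nonbasic or basic.

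First I would treat case \emph{(i)}, where $y_{m+k}$ is nonbasic. Then no basic variable carries a nonzero objective coefficient, so $c_B = 0$ and $\overline c_j = c_j$ for every nonbasic $j$. Consequently $\overline c_j = 0$ for all nonbasic $j \neq m+k$, which is compatible with either bound, while $\overline c_{m+k} = 1 > 0$. A strictly positive reduced cost satisfies the dual-feasibility test only at the upper bound, so $B$ is dual feasible precisely when $y_{m+k} \in J^+$; this is exactly condition \emph{(i)}.

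Next I would treat case \emph{(ii)}, where $y_{m+k}$ is basic, occupying the row indexed by $i$ of the tableau $y_{m+k} + \sum_{j \in J}\overline a_{ij} y_j = \overline a_{i0}$. Now $c_B$ is the unit vector selecting that single row, so ${A^B}^{-1} A^j = \overline A^j$ gives, for each nonbasic $j$, the reduced cost $\overline c_j = -c_B^T {A^B}^{-1} A^j = -\overline a_{ij}$. Feeding this into the dual-feasibility test: a nonbasic $j$ at its lower bound needs $\overline c_j \le 0$, i.e.\ $\overline a_{ij} \ge 0$, while a nonbasic $j$ at its upper bound needs $\overline c_j \ge 0$, i.e.\ $\overline a_{ij} \le 0$. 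This is precisely the sign-to-bound correspondence asserted in \emph{(ii)}, the strict inequalities $\overline a_{ij} > 0$ and $\overline a_{ij} < 0$ forcing the lower and upper bound respectively (and $\overline a_{ij} = 0$ leaving the bound free).

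The computation is routine; the only point demanding care is the orientation of the reduced-cost optimality conditions for a maximization problem carrying both lower and upper bounds, together with the observation that in case \emph{(i)} it is the reduced cost of $y_{m+k}$ itself, equal to $1$, that pins the variable to its upper bound. I would finally remark that, since the reduced costs are determined by $B$, the conditions derived above are not merely sufficient but necessary: the two cases exhaust the possibilities, so they in fact characterize all dual-feasible bases, the only freedom being at nonbasic variables with $\overline a_{ij} = 0$.
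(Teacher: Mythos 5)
Your proof is correct and follows essentially the same route as the paper's: in both cases the argument is a direct computation of the reduced costs ($\overline c_j = 0$ or $1$ when $y_{m+k}$ is nonbasic, and $\overline c_j = -\overline a_{ij}$ when it is basic in row $i$), combined with the sign conditions for dual feasibility at lower and upper bounds. Your closing remark that the conditions are also necessary (up to the freedom at $\overline a_{ij}=0$) is a small but accurate addition beyond what the paper states.
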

\begin{proof}
Let $B$ be a basis, $J^+$ be the set of nonbasic variables at their upper bound and $J^-$ the set of nonbasic variables at their lower bound. 

First, we prove the case {\em (i)}. 
If $m+k \not\in B$ then $c_B=0$, therefore all variables have a zero reduced cost except $x_{m+k}$ which has a reduced cost of $1$ and if $m+k$ is nonbasic, it has to be at its upper bound for the solution to be dual feasible.

Now, we suppose that $m+k \in B$. We assume that $x_{m+k}$ is basic in row 
$i$. The reduced cost of a variable $j \in J$ is given by $\overline c_j = c_j - c_B {A^{B}}^{-1} A^j = 0 - \overline a_{ij}$. Therefore, $B$ is dual feasible if for all $j \in J$ such that $\overline a_{ij} > 0$, $j \in J^-$ and for all $j \in J$ such that $\overline a_{ij} < 0$, $j \in J^+$. 
\end{proof}

Now we study the cuts associated to dual feasible solutions of \eqref{MLP}. Given a dual feasible basis $B$ of \eqref{MLP}, we will call the cut \eqref{cut:lp} associated to the solution of \eqref{MP:dual} defined by $B$, the cut associated to $B$. Since we use $B$ both as a basis for \eqref{LP:std} and \eqref{MP:std}, we need to take particular care in denoting the right-hand-sides of the two associated LP tableau's.
We will denote by $\overline b := {A^B}^{-1} b$, the right hand side in the tableau of \eqref{LP:std} and by $\overline a_0 := {A^B}^{-1} (A'\hat x - b f_k)$ the right hand side in the tableau of \eqref{MP:std}.

Using the reduced cost computed in Theorem \ref{thm:equiv}, we can compute the values for the dual variables. Using these values the cut \eqref{cut:lp} 
can be computed. Keep in mind that by Lemma \ref{lem:bp} and remark \ref{remark:non_valid}, only dual solutions of negative cost give rise to valid inequalities.

Note that in case {\em (i)}, when $y_{m+k}$ is non-basic, all dual variables have value $0$ except $s_k$ which is equal to $1$. In that case, the dual objective value of the solution is alway non-negative:
\begin{equation*}
\hat x_k - \lceil \hat x_k \rceil ( \hat x_k  - \lfloor \hat x_k\rfloor ) = \lfloor \hat x_k\rfloor (\lceil \hat x_k \rceil - \hat x_k) \ge 0,
\end{equation*}
and therefore no valid inequality can be generated (note that this case is the one exhibited in remark \ref{remark:non_valid}).

The only case of interest is therefore when $y_{m+k}$ is basic. In that case, the values of the dual variables in the basic solution are given by the reduced costs computed above: 
\begin{align}
\label{eq:dual_sol_1}
&u_j =
\begin{cases}
- \overline a_{ij} &\text{if } j \in J^+ \cap M,\\
0 & \text{if } j \in M \setminus J^+
\end{cases} 
\\
\label{eq:dual_sol_2}
&v_j = 
\begin{cases}
\overline a_{ij} &\text{if } j \in J^- \cap M\\
0 & \text{if } j \in M \setminus J^-
\end{cases}\\
\label{eq:dual_sol_3}
&s_j =
\begin{cases}
- \overline a_{ij} &\text{if } j \in J^+ \cap N,\\
0 & \text{if } j \in N \setminus J^+
\end{cases} 
\\
\label{eq:dual_sol_4}
&t_j = 
\begin{cases}
\overline a_{ij} &\text{if } j \in J^- \cap N\\
0 & \text{if } j \in N \setminus J^-
\end{cases}
\end{align}

In the following, we will denote by $\overline x \in \mathbb R^{n+m}$ the primal basic solution of \eqref{LP:std} corresponding to basis $B$
First, we show that the dual solution can have a negative cost only if $\overline x_{m+k} \in \left] \left\lfloor \hat x_k \right\rfloor, \left\lceil \hat x_k \right\rceil \right[$.

\begin{lem}
\label{lem:x_k}
Let $B$ be the index set of a dual feasible basis of \eqref{MP:std} such that ${m+k} \in B$. Let $\overline x \in \mathbb R^{n+m}$ be the primal basic solution of \eqref{LP:std} corresponding to basis $B$.
Suppose that $\overline x_{m+k} \not\in  \left] \lfloor \hat x_k \rfloor, \lceil \hat x_k \rceil \right[$, then $\hat x$ is not cut by the dual solution to \eqref{MP:std}.
\end{lem}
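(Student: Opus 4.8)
The plan is to show that the dual solution attached to $B$ has non-negative cost: since the cost of that dual solution is precisely the quantity $\alpha^T \hat x - \beta$ for the associated cut \eqref{cut:lp} (cf.\ the discussion following \eqref{CGLP:simple} and Remark~\ref{remark:non_valid}), a non-negative cost means $\hat x$ satisfies the inequality and is therefore not cut. By LP duality for the basis $B$, this cost equals the value of the objective of \eqref{MP:std} at the basic solution $\overline y$ determined by $B$ (primal and dual basic objectives agree for a fixed basis, whether or not $\overline y$ is primal feasible), that is $\overline y_{m+k} - \lceil \hat x_k \rceil f_k$. So it suffices to prove
\[
\overline y_{m+k} \ge \lceil \hat x_k \rceil f_k
\]
in each of the two cases $\overline x_{m+k} \le \lfloor \hat x_k \rfloor$ and $\overline x_{m+k} \ge \lceil \hat x_k \rceil$.

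First I would read off $\overline y_{m+k}$ from the \eqref{MP:std} tableau. If $y_{m+k}$ is basic in row $i$, the bounded-variable basic solution fixes every nonbasic $y_j$ to $0$ when $j \in J^-$ and to its upper bound $\hat x_j$ when $j \in J^+$, giving
\[
\overline y_{m+k} = \overline a_{0,i} - \sum_{j \in J^+} \overline a_{ij} \hat x_j .
\]
Here the right-hand side of \eqref{MP:std} is $b f_k$, so $\overline a_{0,i} = f_k \overline b_i = f_k \overline x_{m+k}$, where $\overline x_{m+k} = \overline b_i$ is the value of $x_{m+k}$ in the \eqref{LP:std} basic solution; the two tableaux share the matrix $\overline A$, so the $\overline a_{ij}$ above are exactly those of \eqref{LP:std}. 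Lemma~\ref{thm:equiv} then supplies the sign information I need: $\overline a_{ij} \le 0$ for $j \in J^+$ and $\overline a_{ij} \ge 0$ for $j \in J^-$.

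The second ingredient is the \eqref{LP:std} tableau identity evaluated at the (standard-form) point $\hat x$, which satisfies $A \hat x = b$: row $i$ reads $\hat x_{m+k} + \sum_{j \in J} \overline a_{ij} \hat x_j = \overline b_i = \overline x_{m+k}$, hence $\sum_{j \in J} \overline a_{ij} \hat x_j = \overline x_{m+k} - \hat x_k$. With these two facts the two cases split cleanly. When $\overline x_{m+k} \ge \lceil \hat x_k \rceil$, the signs $\overline a_{ij} \le 0$ on $J^+$ give at once $\overline y_{m+k} \ge f_k \overline x_{m+k} \ge \lceil \hat x_k \rceil f_k$ (using $f_k \ge 0$). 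When $\overline x_{m+k} \le \lfloor \hat x_k \rfloor$, I would use the identity to trade the $J^+$ terms for the $J^-$ terms, obtaining $\overline y_{m+k} = (f_k - 1)\overline x_{m+k} + \hat x_k + \sum_{j \in J^-} \overline a_{ij} \hat x_j$; the sum is non-negative (signs on $J^-$), and since $f_k - 1 < 0$ the hypothesis $\overline x_{m+k} \le \lfloor \hat x_k \rfloor$ yields $\overline y_{m+k} \ge (f_k-1)\lfloor \hat x_k \rfloor + \hat x_k$, which collapses to $\lceil \hat x_k \rceil f_k$ after substituting $\hat x_k = \lfloor \hat x_k \rfloor + f_k$ and $\lceil \hat x_k \rceil = \lfloor \hat x_k \rfloor + 1$.

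The step needing the most care is the bounded-variable bookkeeping: one must keep the right-hand side $f_k \overline b$ of the \eqref{MP:std} tableau distinct from the right-hand side $\overline b$ of the \eqref{LP:std} tableau while exploiting that they share $\overline A$, and apply the correct one of the two sign patterns of Lemma~\ref{thm:equiv} in each case. The degenerate case $\hat x_k \in \mathbb Z$ (so $f_k = 0$) is immediate, since then $\hat x$ already satisfies the disjunction and no violated cut can exist.
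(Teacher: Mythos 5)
Your proof is correct, but it takes a genuinely different route from the paper's. The paper's argument is essentially two lines: it lifts the dual basic solution to a solution of (CGLP), reads off $u_0 = \lceil \hat x_k \rceil - \overline x_{m+k}$ and $v_0 = \overline x_{m+k} - \lfloor \hat x_k \rfloor$, observes that the hypothesis forces $u_0 \le 0$ or $v_0 \le 0$, and invokes Lemma~\ref{lem:bp} to conclude $\alpha^T \hat x \ge \beta$. You instead stay on the primal side of \eqref{MP:std}: you identify the cost of the complementary dual solution with the primal basic objective $\overline y_{m+k} - \lceil \hat x_k \rceil f_k$, and then bound $\overline y_{m+k}$ from below using the sign pattern of Lemma~\ref{thm:equiv} together with the row-$i$ tableau identity of \eqref{LP:std} evaluated at (the standard-form lift of) $\hat x$, splitting into the cases $\overline x_{m+k} \ge \lceil \hat x_k \rceil$ and $\overline x_{m+k} \le \lfloor \hat x_k \rfloor$. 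Both arguments are sound; the paper's is shorter because the scalars $u_0, v_0$ already encode exactly the comparison of $\overline x_{m+k}$ with $\lfloor \hat x_k \rfloor$ and $\lceil \hat x_k \rceil$, and it reuses machinery (Lemma~\ref{lem:bp}) needed elsewhere, whereas yours is self-contained at the level of the membership LP and makes explicit where dual feasibility (the signs of $\overline a_{ij}$ on $J^+$ and $J^-$) enters. One bookkeeping remark: your reading of the right-hand side of \eqref{MP:std} as $b f_k$, hence $\overline a_{0,i} = f_k \overline b_i$, is the one consistent with the dual \eqref{MP:dual} and with the slack convention of \eqref{LP:std}, even though the displayed right-hand side in the paper's statement of \eqref{MP:std} and its definition of $\overline a_0$ read differently; your choice is the internally consistent one and your computation checks out, including the degenerate case $f_k = 0$.
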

\begin{proof}
By Lemma \ref{lem:bp} the dual solution gives a cut for $\hat x$ only if $u_0$ and $v_0$ are positive.
$u_0$ is given by $u_0 = (u - v)^T b + \lceil \hat x_{m+k} \rceil$. Note that $(u-v)^T b = -\sum_{i \in M} \overline a_{ij} b = -\overline x_{m+k}$. Therefore $u_0 > 0$ only if $\overline x_{m+k} < \lceil \hat x_{m+k} \rceil$. Similarly since $v_0 = 1 - u_0$, $v_0 > 0$ only if $\overline x_{m+k} > \lfloor \hat x_{k} \rfloor$.
\end{proof}

Lemma \ref{lem:x_k} indicates that the only relevant dual feasible bases for valid cutting planes are those where $\overline x_k \in  \left] \lfloor \hat x_k \rfloor, \lceil \hat x_k \rceil \right[$. Next theorem shows that for those dual feasible bases, the cut derived is the intersection cut obtained from the row of the tableau of \eqref{LP:std} corresponding to the same basis and the same basic variable.
The row of the tableau of \eqref{LP:std} corresponding to $x_{m+k}$ reads
$$
x_{m+k} + \sum\limits_{j \in J} \overline a_{ij} x_j = \overline b_i
$$

\begin{thm}
\label{thm:eqi_inter}
Let $B$ be the index set of a dual feasible basis of \eqref{MP:std} such that ${m+k} \in B$. Let $\overline x \in \mathbb R^{n+m}$ be the primal basic solution of \eqref{LP:std} corresponding to this basis.
If $\overline x_{m+k} \in  \left] \lfloor \hat x_k \rfloor, \lceil \hat x_k \rceil \right[$, the cut obtained from the dual solution of \eqref{MP:std} corresponding
to the basis indexed by $B$ is equivalent to the simple intersection cut for $x_k$ of \eqref{LP:std} obtained from the basis indexed by $B$.
\end{thm}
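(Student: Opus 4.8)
The plan is to prove the stronger fact that the two cuts are not merely equivalent up to scaling but actually \emph{identical} once the slack variables are eliminated, i.e.\ that they define the same affine function on the subspace $\{x : Ax = b\}$. Throughout I work in case {\em (ii)} of Lemma~\ref{thm:equiv}, so $x_{m+k}$ is basic in the row $i$ whose tableau reads $x_{m+k} + \sum_{j\in J}\overline a_{ij}x_j = \overline b_i$. The decisive modelling choice is to write \eqref{cut:lp} using only the ``$u$-side'' data: set $u_0 := \lceil \hat x_k \rceil - (u-v)^T b$, so that \eqref{cut:lp} becomes $\alpha^T x \ge \beta$ with $\alpha = u^T A' + s - u_0 e_k$ and $\beta = u^T b - u_0\lfloor \hat x_k \rfloor$. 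The reason to favour this one-sided form is that substituting the slacks straight into the tableau form \eqref{eq:inter} of the intersection cut mixes the two scalars $u_0$ and $v_0$ asymmetrically across $J^+$ and $J^-$; keeping everything on the $u$-side lets a single substitution of the basic row collapse the whole expression.

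First I would pin down $u_0$. From the computation in the proof of Lemma~\ref{lem:x_k} one has $u_0 = \lceil \hat x_k \rceil - \overline x_{m+k}$, and the hypothesis $\overline x_{m+k}\in\,]\lfloor \hat x_k\rfloor,\lceil \hat x_k\rceil[$ forces $\lfloor \overline x_{m+k}\rfloor = \lfloor \hat x_k\rfloor =: \pi_0$. Hence $\overline b_i = \overline x_{m+k}$, the intersection-cut fraction is $f_0 = \overline b_i - \lfloor \overline b_i\rfloor = \overline x_{m+k}-\pi_0$, and therefore $u_0 = 1-f_0$ (this is the only place the interval hypothesis enters). Next I would carry out the main algebraic step. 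Writing $\alpha^T x - \beta = u^T(A'x - b) + s^T x - u_0(x_{m+k}-\pi_0)$ and using $A'x - b = x_M$ on $\{Ax=b\}$ together with the dual values \eqref{eq:dual_sol_1}--\eqref{eq:dual_sol_4} (which vanish off $J^+$ and equal $-\overline a_{ij}$ on $J^+$), one gets $u^T x_M + s^T x = -\sum_{j\in J^+}\overline a_{ij}x_j$. Substituting the basic row $x_{m+k}-\pi_0 = f_0 - \sum_{j\in J}\overline a_{ij}x_j$ then yields
\[
\alpha^T x - \beta = -\sum_{j\in J^+}\overline a_{ij}x_j - u_0 f_0 + u_0\sum_{j\in J}\overline a_{ij}x_j .
\]

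To finish I would split $J = J^+\cup J^-$ and insert $u_0 = 1-f_0$. The coefficient of $x_j$ becomes $(u_0-1)\overline a_{ij} = -f_0\,\overline a_{ij}$ for $j\in J^+$ and $u_0\,\overline a_{ij} = (1-f_0)\overline a_{ij}$ for $j\in J^-$; since $\overline a_{ij}<0$ on $J^+$ and $\overline a_{ij}>0$ on $J^-$ by Lemma~\ref{thm:equiv}, both equal $\max\{\overline a_{ij}(1-f_0),\,-\overline a_{ij}f_0\}$, the intersection-cut coefficient of \eqref{eq:inter}, while the constant term is $-u_0 f_0 = -f_0(1-f_0)$. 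Thus $\alpha^T x - \beta = \sum_{j\in J}\max\{\overline a_{ij}(1-f_0),\,-\overline a_{ij}f_0\}\,x_j - f_0(1-f_0)$ as an identity on $\{x:Ax=b\}$, so \eqref{cut:lp} and the simple intersection cut are literally the same inequality. I expect the main obstacle to be not the calculation itself but recognizing that the one-sided ($u$-side) representation is the correct starting point — a naive elimination of the slacks from \eqref{eq:inter} does not telescope — together with the sign bookkeeping needed to identify the collapsed coefficients with the $\max$-form of \eqref{eq:inter}; the interval hypothesis is used solely to guarantee $u_0\in\,]0,1[$ and $\lfloor\overline b_i\rfloor=\pi_0$.
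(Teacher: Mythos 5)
Your proposal is correct and follows essentially the same route as the paper's proof: write the cut in the slack space, substitute the dual values \eqref{eq:dual_sol_1}--\eqref{eq:dual_sol_4} to get $-\sum_{j\in J^+}\overline a_{ij}x_j$, use $(u-v)^Tb=\overline b_i$ and the tableau row to eliminate $x_{m+k}$, and match the resulting coefficients to the $\max$-form of \eqref{eq:inter} via the sign conditions of Lemma~\ref{thm:equiv}; your ``$u$-side'' form with $u_0=\lceil\hat x_k\rceil-(u-v)^Tb=1-f_0$ is only a cosmetic repackaging of the paper's \eqref{cut:lp}, whose $x_k$-coefficient is already $-u_0$. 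The conclusion that the two inequalities are literally identical on $\{Ax=b\}$ is likewise what the paper's equation \eqref{there} establishes.
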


\begin{proof}
We consider the cut \eqref{cut:lp} defined by the dual solution. To show that \eqref{cut:lp} is equivalent to the intersection cut, we first need to write it in the space with slack variables.
\begin{equation*}
u^T A' x_N + s^T x_N + \left( \left(u-v\right)^T b - \lceil \hat x_{k} \rceil  \right) x_{m + k} \ge u^T b + \left( \left(u-v\right)^T b  - \lceil \hat x_k \rceil \right) \lfloor  x_k \rfloor.
\end{equation*} We start by replacing $A'x_N -b$ with $x_M$:
$$
u^T x_M + s^T x_N + \left( \left(u-v\right)^T b - \lceil \hat x_{k} \rceil  \right) x_{m + k} \ge \left( \left(u-v\right)^T b  - \lceil \hat x_k \rceil \right) \lfloor  x_k \rfloor.
$$
Using the values of $u$ and $s$ given by \eqref{eq:dual_sol_1} and \eqref{eq:dual_sol_3} this can be rewritten as:
$$
- \sum\limits_{j\in J^+} \overline a_{ij} x_{j} + \left( \left(u-v\right)^T b - \lceil \hat x_{k} \rceil  \right) x_{m + k} \ge \left( \left(u-v\right)^T b  - \lceil \hat x_k \rceil \right) \lfloor  x_k \rfloor. 
$$
Noting that $(u-v)^T b = ({A^B}^{-1} b)_i = \overline b_i$ the cut can be rewritten as:
$$
- \sum\limits_{j\in J^+} \overline a_{ij} x_{j} +  \left( \overline b_i - \lceil \hat x_{k} \rceil  \right) x_{m + k} \ge \left(\overline b_i  - \lceil \hat x_k \rceil \right) \lfloor  x_k \rfloor. 
$$
Now using the tableau row of \eqref{LP:std} $x_{m+k} + \sum_{j \in J}\overline a_{ij} x_j = \overline b_i$, we eliminate $x_{m+k}$:
$$
- \sum\limits_{j\in J^+} \overline a_{ij} x_{j} + \left( \overline b_i - \lceil \hat x_{k} \rceil  \right)  (\overline b_i - \sum\limits_{j\in J} \overline a_{ij} x_{j}) \ge \left(\overline b_i  - \lceil \hat x_k \rceil \right) \lfloor  x_k \rfloor.
$$
Finally, re-grouping the coefficient, we obtain:
\begin{equation}
\label{almost}
- (\overline b_i - \lfloor \hat x_k \rfloor) \sum\limits_{j\in J^+} \overline a_{ij} x_{j} + (\lceil \hat x_k \rceil - \overline b_i) \sum\limits_{j\in J^-} \overline a_{ij} x_{j} \ge (\overline b_i - \lfloor \hat x_k \rfloor) (\lceil \hat x_k \rceil - \overline b_i) .
\end{equation}

Since $\overline b_i = \overline x_{m+k} \in \left] \lfloor \hat x_k \rfloor, \lceil \hat x_k \rceil \right[$, we have $\overline b_i - \lfloor \hat x_{k} \rfloor = \overline b_i - \lfloor \overline b_i \rfloor$, defining $f_0 := \overline b_i - \lfloor \overline b_i \rfloor = \overline x_{m+k} - \lfloor \overline x_{m+k} \rfloor$ as in \eqref{eq:inter}, we have that \eqref{almost} is equivalent to 
\begin{equation}
\label{there}
- f_0 \sum\limits_{j\in J^+} \overline a_{ij} x_{j} + (1 - f_0) \sum\limits_{j\in J^-} \overline a_{ij} x_{j}\ge (1 - f_0)  \overline f_0.
\end{equation}
This last cut is identical to the intersection cut \eqref{eq:inter}. Indeed,  if $j \in J^+$, $\overline a_{ij} \le 0$ implies that $\max\{-f_0 \overline a_{ij}, (1 - f_0) \overline a_{ij}\} = -f_0 \overline a_{ij}$; and if $j \in J^-$, $\overline a_{ij} \ge 0$ implies that $\max\{-f_0 \overline a_{ij}, (1 - f_0) \overline a_{ij}\} = (1-f_0) \overline a_{ij}$. This ends the proof of the Theorem.
\end{proof}

Theorem \ref{thm:eqi_inter} implies that every cut obtained from a negative basic feasible solution \eqref{MP:dual} is the simple intersection cut obtained from a tableau row of \eqref{LP:std}. We now consider the strengthened cuts obtained from \eqref{MP:dual} and show that they are GMI cuts.

The strengthened cuts obtained from \eqref{MP:dual} are obtained by applying the strengthening operation defined in Section \ref{sec:cuts}. To apply the operation, we need to transform our solution of \eqref{MP:dual} defined by (\ref{eq:dual_sol_1})--(\ref{eq:dual_sol_4}) to the corresponding solution of (CGLP). This is simply done by taking $u_0 = (u -v)^T A + \pi_0 + 1$, $v_0 = 1 - u_0$, $\alpha = u^T A + s -u_0 e_k$ and $\beta = u^T b$. One can easily check that if $u,v$ is feasible for \eqref{MP:dual}, this solution is feasible for (CGLP).

\begin{thm}
\label{thm:eqi_GMI}
Let $B$ be the index set of a dual feasible basis of \eqref{MP:std} such that ${m+k} \in B$. Let $\overline x \in \mathbb R^{n+m}$ be the primal basic solution of \eqref{LP:std} corresponding to basis $B$.
If $\overline x_{m+k} \in  \left] \lfloor \hat x_k \rfloor, \lceil \hat x_k \rceil \right[$, the strengthened lift-and-project cut obtained from the dual solution of \eqref{MP:std} corresponding
to $B$ is equivalent to the GMI cut obtained from basis $B$.
\end{thm}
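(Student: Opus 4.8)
The plan is to derive the statement from Theorem \ref{thm:eqi_inter} by writing the strengthened cut as the intersection cut plus a correction term, and then checking that the correction is exactly the monoidal strengthening that turns the intersection cut into the GMI cut. First I would record the two pieces of ``translation data'' that tie the strengthening operation of Section \ref{sec:cuts} to the tableau of \eqref{LP:std}. From the conversion preceding the theorem together with the computation in the proof of Lemma \ref{lem:x_k}, one has $\hat u_0 = \lceil \hat x_k\rceil - \overline b_i = 1 - f_0$ and $\hat v_0 = \overline b_i - \lfloor \hat x_k\rfloor = f_0$, where $f_0 := \overline b_i - \lfloor \overline b_i\rfloor$ as in Theorem \ref{thm:eqi_inter}, and in particular $\hat u_0 + \hat v_0 = 1$. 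The crucial identity is $m_j = \overline a_{ij}$ for every structural variable $j$: since $\hat u_0 + \hat v_0 = 1$ this amounts to $(\hat u - \hat v)^{T} A^{j} = \overline a_{ij}$, which holds because, by \eqref{eq:dual_sol_1}--\eqref{eq:dual_sol_4}, the vector $\hat u - \hat v$ coincides componentwise with the $i$-th row of ${A^B}^{-1}$ (its $\ell$-th slack component is $-\overline a_{i\ell}$, and $-e_\ell$ is exactly the $\ell$-th column of $A=(-I\,A')$), so that multiplying by the column $A^j$ returns the tableau entry $\overline a_{ij}$ by definition of $\overline A = {A^B}^{-1}A$. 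Consequently $\lfloor m_j\rfloor=\lfloor\overline a_{ij}\rfloor$, $\lceil m_j\rceil=\lceil\overline a_{ij}\rceil$ and $m_j-\lfloor m_j\rfloor = f_j$, i.e. the floors, ceilings and fractional parts driving the strengthening of \eqref{MP:dual} are precisely those driving the GMI formula \eqref{GMI}.

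Next I would write the strengthened structural cut as the cut \eqref{cut:lp} with each integer coefficient $\hat\alpha_j$, $j\in\{1,\dots,p\}\setminus\{k\}$, replaced by $\overline\alpha_j$; equivalently as \eqref{cut:lp} with the extra term $\sum_{j}\delta_j\,x_{m+j}$ added to its left-hand side, where $\delta_j := \overline\alpha_j - \hat\alpha_j$. Because the coefficient of the disjunctive variable $x_{m+k}$ and the right-hand side are untouched by the strengthening, pushing this augmented cut through the same slack substitution $A'x_N-b = x_M$ and the same elimination of $x_{m+k}$ used in the proof of Theorem \ref{thm:eqi_inter} reproduces \eqref{there} and leaves each nonbasic $x_{m+j}$ simply carrying the additional coefficient $\delta_j$ (a nonbasic variable undergoes no further substitution). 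Hence the strengthened cut equals the intersection cut \eqref{there} plus $\sum_j \delta_j\,x_{m+j}$, and it remains only to evaluate $\delta_j$.

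Then I would carry out the short case analysis of $\delta_j$ using $\hat u_0=1-f_0$, $\hat v_0=f_0$ and $m_j=\overline a_{ij}$. For a nonbasic integer $j$ at its upper bound ($j\in J^+$, so $t_j=0$ and $\hat\alpha_j=\hat v A^j$) a direct substitution gives $\overline\alpha_j = \hat\alpha_j + f_0\, m_j + \min\{(1-f_0)f_j,\,f_0(1-f_j)\}$, so that adding $\delta_j$ to the intersection coefficient $-f_0\,\overline a_{ij} = -f_0\,m_j$ leaves exactly $\min\{f_j(1-f_0),(1-f_j)f_0\}$; symmetrically, for $j\in J^-$ (so $s_j=0$ and $\hat\alpha_j=\hat u A^j$) the correction cancels the intersection coefficient $(1-f_0)\overline a_{ij}=(1-f_0)m_j$ and again leaves $\min\{f_j(1-f_0),(1-f_j)f_0\}$. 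Since $f_j(1-f_0)\le (1-f_j)f_0 \iff f_j\le f_0$, this is precisely the GMI coefficient of $x_{m+j}$ in \eqref{GMI}. Finally, for a \emph{basic} integer $j\neq k$ one has $m_j=\overline a_{ij}=0$ (the entry of a basic column in the row of $x_{m+k}$), hence $f_j=0$, no strengthening occurs and $\delta_j=0$, so the augmentation introduces no spurious terms on basic variables. Together with the facts that the continuous nonbasic coefficients are not strengthened and that the right-hand side $f_0(1-f_0)$ is inherited unchanged from Theorem \ref{thm:eqi_inter}, this identifies the strengthened cut with the GMI cut \eqref{GMI}.

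The main obstacle is the passage carried out in the first and third paragraphs: justifying that the strengthening, which is defined on the \emph{structural} coefficients of \eqref{cut:lp}, commutes with the change of coordinates into the tableau space of \eqref{LP:std}. The two delicate points are the identity $m_j=\overline a_{ij}$, which hinges on recognizing $\hat u-\hat v$ as a row of ${A^B}^{-1}$, and the verification that the floor/ceiling selection realizing $\overline\alpha_j$ reproduces the $f_j\le f_0$ versus $f_j> f_0$ dichotomy of \eqref{GMI}; once these are established, the remaining algebra is routine.
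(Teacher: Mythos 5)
Your proposal is correct and follows essentially the same route as the paper: it computes $\hat u_0 = 1-f_0$, $\hat v_0 = f_0$, establishes $m_j = \overline a_{ij}$ for the nonbasic integer variables (the paper gets this from the dual constraint $(u-v)^TA+(s-t)=e_k$ giving $m_j = t_j - s_j$, you from identifying $\hat u-\hat v$ with a row of ${A^B}^{-1}$ --- equivalent computations), and then performs the same $J^+$/$J^-$ case analysis showing the correction $\overline\alpha_j-\hat\alpha_j$ turns the intersection-cut coefficient of \eqref{there} into $\min\{f_j(1-f_0),(1-f_j)f_0\}$, i.e.\ the GMI coefficient. Your treatment of the basic columns ($m_j=0$, hence no change) and of the $f_j\le f_0$ dichotomy matches the paper's, so no further comment is needed.
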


\begin{proof}
To establish the equivalence, we need to write the strengthened lift-and-project cut obtained from the basis $B$. We recall, that the strengthening consist in replacing the coefficient $\hat \alpha_j := \max \{ u^T A^j, v^T A^j\}$ with $\overline \alpha_j$ for $j=1,\ldots, p$, $j\neq k$ where $\overline \alpha_j$ is given by
\begin{equation*}
\overline \alpha_j := \min\{u A^j - u_0 \lfloor m_j \rfloor, v A^j + v_0 \lceil m_j \rceil \}
\end{equation*}
where $u, v, u_0$ and $v_0$ are feasible for (CGLP) and
\begin{equation*}
m_j := \frac{ u A^j - v A^j}{ u_0 + v_0}
\end{equation*}
Therefore to obtain the strengthened cut, we need to add $\overline \alpha_j - \hat \alpha_j$ to the coefficient of $x_j$ obtained in \eqref{there} for $j=1,\ldots, p$, $j\neq k$. Next, we verify that doing so one obtains the GMI cut \eqref{GMI}.

Our first order of business is to build the solution to (CGLP), we take $ u$ and $v$ given by the dual solution corresponding to basis $B$ as computed in \eqref{eq:dual_sol_1}--\eqref{eq:dual_sol_4}. $ u_0$ is then given by $u_0 = (u -v)^T A + \pi_0 + 1 = \pi_0 + 1 - \hat x_{m+k} = 1 - f_0$ (where, as before $f_0 = \hat x_{m+k} - \lfloor x_{m+k} \rfloor$) and $v_0 = 1 - u_0 = f_0$.

Using the definition of \eqref{MP:dual}, we can rewrite $m_j$ as $m_j = ( t_j -  s_j)$. From this, it is immediate that for $j \in B$, since $s_j = t_j =0$, $m_j = 0$. Furthermore, from the constraints of \eqref{MP:dual} we then have $(u - v)^T A^j = 0$ and therefore $\overline \alpha_j = \hat \alpha_j$. This implies that the coefficient of the cut is not changed for $j \in B$.

We now consider $j \in J \cap N^{I}$. In this case, using the value of the dual solution given by \eqref{eq:dual_sol_1}--\eqref{eq:dual_sol_4}, we have $m_j = ( t_j - s_j) = \overline a_{ij}$
Summing up everything, so far, we have that the formula for $\hat \alpha_j$ for $j \in \{1,\ldots,p\} \setminus B$ is given by
\begin{equation*}
\overline \alpha_j = \min\{ uA^j - (1 - f_0) \lfloor \overline a_{ij} \rfloor, vA^j + f_0 \lceil \overline a_{ij} \rceil \}.
\end{equation*}

To establish the strengthened coefficients, we consider separately the case $j \in J^+$ and the case $j \in J^-$.

First, assume $j \in J^+ \cap N^{I}$. Then $\overline a_{ij} \le 0$, $s_j = - \overline a_{ij}$ and $t_j = 0$. From $(v - u)^T A^j = (s_j - t_j) = - \overline a_{ij} \ge 0$, we have 
$\hat \alpha_j =  v A^j$ and $uA^j = v A^j + \overline a_{ij}$ therefore:
\begin{multline*}
\overline \alpha_j - \hat \alpha_j = \min \left\{vA^j + \overline a_{ij} - \left(1 - f_0 \right) \left\lfloor \overline a_{ij} \right\rfloor, 
vA^j + f_0 \left\lceil \overline a_{ij} \right\rceil \right\} - vA^j = \\
\min \left\{\overline a_{ij} - \left(1 - f_0 \right) \left\lfloor \overline a_{ij} \right\rfloor,  f_0 \left\lceil \overline a_{ij} \right\rceil \right\}.
\end{multline*}
Since $j \in J^+$, the coefficient for $x_j$ in \eqref{there} is $-f_0 \overline a_{ij}$, the new coefficient is therefore:
\begin{multline*}
-f_0 \overline a_{ij} + \min \left\{\overline a_{ij} - \left(1 - f_0\right) \left\lfloor \overline a_{ij} \right\rfloor,  
f_0 \left\lceil \overline a_{ij} \right\rceil \right\}\\
= \min \left\{-f_0 \overline a_{ij} + \overline a_{ij} - \left(1 - f_0 \right) \left\lfloor \overline a_{ij} \right\rfloor, 
-f_0 \overline a_{ij} + f_0 \left\lceil \overline a_{ij} \right\rceil \right\}\\
= \min \left\{ \left(1-f_0\right) \left( \overline a_{ij} - \left\lfloor \overline a_{ij} \right\rfloor \right), f_0 \left( \left\lceil \overline a_{ij} \right\rceil - \overline a_{ij} \right) \right\}
\end{multline*}

Now, we consider the case $j \in J^- \cap N^{I}$. Then $\overline a_{ij} \ge 0$, $s_j = 0$ and $t_j = \overline a_{ij}$. From $(v - u)^T A^j = (s_j - t_j) = - \overline a_{ij} \le 0$, we have 
$\hat \alpha_j = u A^j$ and $vA^j = u A^j - \overline a_{ij}$ therefore:
\begin{multline*}
\overline \alpha_j - \hat \alpha_j = \min \left\{uA^j - \left(1-f_0 \right) \left\lfloor \overline a_{ij} \right\rfloor, uA^j -  \overline a_{ij} + f_0 \left\lceil \overline a_{ij} \right\rceil \right\} - uA^j = \\
\min \left\{ - \left(1-f_0 \right) \left\lfloor \overline a_{ij} \right\rfloor, - \overline a_{ij} + f_0 \left\lceil \overline a_{ij} \right\rceil \right\}.
\end{multline*}
Since $j \in J^-$, the coefficient for $x_j$ in \eqref{there} is $(1-f_0) \overline a_{ij}$, the new coefficient is therefore:
\begin{multline*}
\left( 1-f_0 \right) \overline a_{ij} + \min \left\{ - \left(1-f_0 \right) \left\lfloor \overline a_{ij} \right\rfloor, - \overline a_{ij} +  f_0 \left\lceil \overline a_{ij} \right\rceil \right\}\\
= \min \left\{ \left(1-f_0 \right) \overline a_{ij} - \left( 1-f_0 \right) \left\lfloor \overline a_{ij} \right\rfloor,  \left(1-f_0 \right) \overline a_{ij} - \overline a_{ij} +f_0 \left\lceil \overline a_{ij} \right\rceil \right\}\\
= \min \left\{ \left(1 - f_0\right) \left( \overline a_{ij} - \left\lfloor \overline a_{ij} \right\rfloor \right), f_0 \left( \left\lceil \overline a_{ij} \right\rceil - \overline a_{ij} \right) \right\}.
\end{multline*}

Therefore, for both $j \in J^+ \cap N^{I}$ and $j \in J^- \cap N^{I}$ we have that the coefficient for the strengthened cut is 
\begin{equation}
\min \left\{ (1 - f_0) \left( \overline a_{ij} - \lfloor \overline a_{ij} \rfloor \right), f_0 \left( \lceil \overline a_{ij} \rceil - \overline a_{ij} \right) \right\}.
\end{equation}
It is trivial to check that this coefficient is the same as the GMI cut coefficient.
\end{proof}

Note that this theorem allows to complete the results of Lemma \ref{lemma:extr}, by adding the fact that the cut obtained is indeed the intersection cut or GMI cut.

Theorems \ref{thm:eqi_inter} and \ref{thm:eqi_GMI} can be seen as the counterparts of Theorems 4A, 4B and 5 in \cite{Balas.Perregaard:03}. Indeed, the statements are almost equivalent except that here the simpler normalization condition $u_0 + v_0 = 1$ is used here whereas the normalization condition $\sum (u_i + v_i) + \sum (s_i+t_i) + u_0 + v_0 = 1$ was used in \cite{Balas.Perregaard:03}. In \cite{Balas.Bonami:09}, the theorems of \cite{Balas.Perregaard:03} were generalized to weighted normalizations of the form $\sum \psi_i (u_i + v_i) + \sum \xi_i (s_i+t_i) + u_0 + v_0 = \psi_0 $ where $\psi \in \mathbb R^m_+$ , $\xi \in R^n_+$ and $\psi_0 \in \mathbb Z_+$. Theorems \ref{thm:eqi_inter} and \ref{thm:eqi_GMI} can be seen as a particular case of those latter theorems (but the proofs here are simpler and more direct). Another important result in \cite{Balas.Perregaard:03} was an algorithm to solve (CGLP) in the tableau of \eqref{LP:std}. Our results show that, with the simpler normalization $u_0 + v_0 = 1$, a similar algorithm to solve \eqref{CGLPb} is simply to solve \eqref{MP:sd}. We note that a limitation of the algorithm in \cite{Balas.Perregaard:03} is that it starts from a basis of \eqref{LP:std} that gives a cut. If such a basis is not available, the solution of \eqref{MP:sd} can be seen as an easy way to find one.
\section{Optimizing over $\PSLP$ and approximating $\PLP$}
We now turn back to our initial goal which is the optimization of $\PSLP$ and $\PLP$. To do so, we follow a simple Kelley cutting plane algorithm \cite{Kelley:60} where the master problem is the LP relaxation of \eqref{MILP} augmented with cuts and the cut generation procedure is the solution of \eqref{MP:sd}. More precisely, at each iteration, we first solve the master problem and obtain a solution $x^*$. If $x^*$ is integer feasible, we stop. Otherwise we solve the separation problem \eqref{MP:sd} for a number of fractional components of $x^*$ which should be integer. If some of the separation problems lead to a cut (i.e. has negative objective value), we add those cuts to the master problem and iterate. Otherwise, if all \eqref{MP:sd} for all $k \in \{1,\ldots,p\}$ such that $x^*_k \not\in \mathbb Z$ have non-negative objective value, we have a proof that $x^* \in \PSLP$. To approximate $\PLP$, we simply strengthen every cut derived by the usual strengthening procedure as done in the proof of Theorem \ref{thm:eqi_GMI}. Note that cuts are never added to \eqref{MP:sd} in the course of the procedure (i.e. we stick to generating rank one cuts only). This simple algorithm should terminate in a finite number of iterations (since there are only a finite number of rank 1 cuts) but there is no complexity guarantee. Also concerning $\PLP$ there is no guarantee of how good is the approximation. Nevertheless, it is guaranteed that approximating $\PLP$ in this way will give a better bound than optimizing $\PSLP$ and since the extra computational cost is negligible, it should always be better.

Note that this approach is similar to the one presented in \cite{Bonami.Minoux:05} although it is presented there from a different point of view. In \cite{Bonami.Minoux:05}, a Benders decomposition is applied to a linear programming model of $\PSLP$ defined in a higher dimensional space. Applying this decomposition amounts to performing the cut generation method outlined above. The main difference between the two approaches is the separation problem that is solved. In \cite{Bonami.Minoux:05}, a (CGLP) is solved while here we solve \eqref{MLP} (other differences are the details of our algorithm that are described below). 

In order to make this algorithm somewhat efficient, one drawback has to be taken care of. In the course of the algorithm, it often happens that although $x^*_k$ is fractional the solution of \eqref{MP:sd} does not lead to a cut. 
If at a given iteration, this happens for all $k \in \{1,\ldots,p\}$, this gives the useful information that $x^* \in \PSLP$ (and stops the algorithm) but if it only happens for some $k$'s it only indicates that $x^* \in P^{(e_k,\lfloor x^*_k \rfloor)}$ but does not help the algorithm in progressing. After the first iteration, this phenomenon usually happens very frequently, and if not taken care of, a significant part of the computation can be spent in solving those separation problems that do not lead to a cut. To try to overcome it, at each iteration we solve \eqref{MP:sd} only for those $k$ which led to a cut in the previous iteration. Of course, to maintain the validity of the algorithm, in the case when no cuts were generated, we need to test all the variables (we also test all the variables in the case where some tailing off is detected). This strategy is obviously not perfect as still some problems \eqref{MP:sd} will not lead to cuts but it greatly improves the efficiency of the procedure (more comments on the practical performance will be made in Section \ref{sec:comp_plp}).
The pseudo-code of the algorithm we apply to approximate $\PLP$ is given in Algorithm \ref{alg:cut}.

\begin{algorithm}
\begin{tabular}{cp{5.3in}}
0. & {\bf Initialize feasible region.}\\
   & $\mathcal C \leftarrow \mathbb \{x \in R^n_+ : A'x \ge b\}$.\\
1. & {\bf Initialize variables list}\\
   & $\mathcal K \leftarrow \{1,\ldots,p\}$, {\tt reinit} $\leftarrow true$.\\
2. & {\bf Master Problem}\\
   & Solve $\max\{{c'}^T x : x \in  C\}$. Let $\hat x$ be the solution.\\
3. & {\bf Filter variables}\\
   &$\mathcal F \leftarrow \{i \in \mathcal K: \hat x_i - \lfloor \hat x_i \rfloor \ge \epsilon \}$, $\mathcal K \leftarrow \emptyset$. Sort $\mathcal F$ by increasing values of $\hat x_k$.\\
4. & {\bf Separate}\\
   & For each $k \in \mathcal F$ solve \eqref{MP:sd}. If the solution is $\le -\epsilon$ construct strengthened cut $\alpha^T x \ge \beta$ from its dual solution and let $\mathcal K \leftarrow K \cup \{k\}$, $\mathcal C \leftarrow C \cap \{x : \alpha^T x \ge \beta\}$.\\
5. & {\bf Termination}\\
   & If $\mathcal K = \emptyset$ and {\tt reinit} $= true$, then {\bf STOP}.\\
6. & {\bf Loop}\\
   & If $\mathcal K = \emptyset$ or tailing off is detected, then {\bf go to 1}, else {\tt reinit} $\leftarrow false$, {\bf go to 2.}
\end{tabular}
\caption{\label{alg:cut} Cutting plane algorithm for $\PLP$}
\end{algorithm}

A second aspect where we try to make our algorithm more efficient is in trying to make the solutions time of the problem \eqref{MP:sd} as small as possible. At each iteration of the algorithm in Step 4, we solve a sequence of such problem for a number of integer constrained variables $k_1,\ldots,k_p$. As can be expected, the problem are solved by the simplex algorithm, and a main question is which starting basis to use. Note that from one problem to the next, in general, all bounds on variables and constraints are changed as well as the objective function. Therefore, we can not expect the optimal basis of \MLP{k_{i}} to be either primal or dual feasible for \MLP{k_{i+1}}. There is one simple exception to this, if $\hat x_{k_{i}} = \hat x_{k_{i+1}}$, both problems have the same constraint system and therefore the solution of \MLP{k_{i}} is primal feasible for \MLP{k_{i+1}}. We try to benefit from this by solving the problems \eqref{MP:sd} in increasing order of the value $\hat x_k$ (also hoping that if $\hat x_{k_{i}}$ is close to $\hat x_{k_{i+1}}$, the solution of \MLP{k_{i}} will be close to being primal feasible for \MLP{k_{i+1}}). Again, we do not claim that our strategies are the best possible but just try to follow some sound heuristics rules, the computation in the next section will illustrate the practical performance.

\section{Computational results}
We now present some computational experiments aimed at assessing the interest of our procedures for optimizing over $\PSLP$ and approximating $\PLP$. The computations are performed with an implementation of Algorithm \ref{alg:cut} in C++. The code allows the use of two different LP solvers Clp \cite{Forrest:04} from COIN-OR \cite{COINOR} and IBM CPLEX. For efficiency reasons, the code calls each solver directly by using their respective native C++ and C API. All experiments are conducted on a machine equipped with Intel Quad Core Xeon 2.93GHz processors and 120 GiB of RAM, using only one thread for each run. We use Clp version 1.11 and Cplex version 12. The tolerance $\epsilon$ in Algorithm \ref{alg:cut} is set to $10^{-4}$ and a time limit of 1 hour is imposed. Note also that since the number of cuts generated can sometime grow very large, we use a cut pool to keep the size of the master problem reasonable.

Our main goal in these experiment is to assess how useful a relaxation $\PLP$ can be and how quickly it can be computed. First, we will present computations of $\PSLP$ and $\PLP$ on a large set of problems. Next, we present several comparisons with different methods:
\begin{itemize}
\item first, a comparison in terms of gap closed with textbook GMI cuts and lift-and-project cuts generated by rounds;
\item second, a comparison with several other methods recently proposed to generate rank-1 cuts: the heuristics proposed by Dash and Goycoolea \cite{Dash.Goycoolea:10}, the relax-and-cut framework proposed by Fischetti and Salvagnin \cite{Fischetti.Salvagnin:10}, the new reduce-and-split proposed by Cornu\'ejols and Nannicini \cite{Cornuejols.Nannicini:10} and finally the computations of the GMI/MIR closure made by Balas, Saxena \cite{Balas.Saxena:08} and Dash, G\"unl\"uk, Lodi \cite{Dash.Gunluk.Lodi:10};
\item finally, some preliminary experiments on using $\PLP$ in a branch-and-cut framework.
\end{itemize}
All experiments are performed on MIPLIB3 \cite{MIPLIB3} and MIPLIB 2003 \cite{Martin.Achterberg.ea:03} problems.

Before proceeding to the results, let us remind that computations of $\PLP$ were already conducted in \cite{Bonami.Minoux:05} and stress the differences with the current paper and the interest of the new experiments presented here. The approach in \cite{Bonami.Minoux:05} used a cut generation LP formulated in a higher dimensional space. Although it is possible that each cut separated was deeper there, the size of the cut generation LPs and their solution time limited the applicability of the approach to small and medium size problems. For that reason the computational experiments in \cite{Bonami.Minoux:05} were limited to problems with up to 1000 variables. We did not carry out systematic comparison between the two approaches but let us note that on the test set used in \cite{Bonami.Minoux:05} our code is about 12 times faster than the results reported there (although our tolerances are likely to be stricter). This is certainly in part due to faster machines and faster LP codes but we believe also to the use of a better separation LP. One main interest of the experiments here is that thanks to these faster computing time we are able to report numbers on the complete MIPLIB 3 and MIPLIB 2003 problems. 

\subsection{Computation of $\PSLP$ and $\PLP$}
\label{sec:comp_plp}
We ran our code to compute $\PSLP$ and approximate $\PLP$ on 62 MIPLIB 3.0 instances\footnote{We do not report computations on {\tt markshare1}, {\tt markshare2} and {\tt pk1}, because our code ran into computational troubles on these two instances when using CPLEX as LP solver. Note however that no gap is closed either by $\PSLP$ or $\PLP$} and 20 MIPLIB 2003 instances that are not already in MIPLIB 3.0. We have ran each instance with four variants of the code: computation of $\PSLP$  and approximation of $\PLP$ both with Clp and CPLEX as LP solver. The code is run on the original instance without any preprocessing. The results of these experiments are summarized in Tables \ref{tab:plp_miplib3} and \ref{tab:plp_miplib2003}. For each test problem, we report the CPU time and the fraction of the integrity gap closed by the method.

\topcaption{CPU time and gap closed by $\PSLP$ and $\PLP$ on MIPLIB 3.0.}\label{tab:plp_miplib3}
\tablefirsthead{\toprule
& \multicolumn{4}{c}{$\PSLP$} &
& \multicolumn{4}{c}{$\PLP$}
\\
\cmidrule{2-5}
\cmidrule{7-10}
& \multicolumn{2}{c}{\tt Clp}
& \multicolumn{2}{c}{\tt CPLEX} &
& \multicolumn{2}{c}{\tt Clp}
& \multicolumn{2}{c}{\tt CPLEX}
\\
name & CPU(s)                        & \% gap                          & CPU(s)                      & \% gap                       &  & CPU(s)                     & \% gap                      & CPU(s) & \% gap \\
\midrule
}
\tablehead{
\multicolumn{10}{c}{\normalsize \tablename\ \thetable{} (continued): CPU time and gap closed by $\PSLP$ and $\PLP$ on MIPLIB 3.0.}\\
\toprule
& \multicolumn{4}{c}{$\PSLP$}
& \multicolumn{4}{c}{$\PLP$}
\\
\cmidrule{2-5}
\cmidrule{7-10}
& \multicolumn{2}{c}{\tt Clp}
& \multicolumn{2}{c}{\tt CPLEX}&
& \multicolumn{2}{c}{\tt Clp}
& \multicolumn{2}{c}{\tt CPLEX}
\\
name & CPU(s)                        & \% gap                          & CPU(s)                      & \% gap                       & & CPU(s)                     & \% gap                      & CPU(s) & \% gap \\
\midrule
}
\tabletail{\bottomrule}
\tablelasttail{\bottomrule}

\begin{center}
\footnotesize
\begin{xtabular}{lrrrrp{1pt}rrrr}
10teams   & 3600   & 0       & 3600    & 31.31  & & 3600   & 100     & 3600    & 100     \\
air03     & 1.24   & 100     & 0.36    & 100    & & 0.86   & 100     & 0.39    & 100     \\
air04     & 3600   & 41.34   & 3600    & 83.71  & & 3600   & 53.66   & 3600    & 91.29   \\
air05     & 3600   & 49.98   & 3600    & 65.43  & & 3600   & 57.90   & 3600    & 68.11   \\
arki001   & 6.01   & 20.34   & 15.59   & 20.34  & & 3.42   & 35.59   & 4.51    & 36.44   \\
bell3a    & 0.01   & 64.56   & 0.02    & 64.56  & & 0.01   & 64.56   & 0.01    & 64.56   \\
bell5     & 0.06   & 86.25   & 0.04    & 86.25  & & 0.04   & 86.55   & 0.03    & 86.55   \\
blend2    & 0.11   & 21.82   & 0.08    & 21.82  & & 0.08   & 22.01   & 0.06    & 22.54   \\
cap6000   & 3.45   & 50      & 0.89    & 50     & & 1.30   & 62.50   & 0.25    & 56.25   \\
dano3mip  & 3600   & 0.20    & 3600    & 0.51   & & 3600   & 0.22    & 3600    & 0.52    \\
danoint   & 3600   & 4.75    & 245.25  & 5.57   & & 3600   & 5.06    & 213.79  & 5.97    \\
dcmulti   & 2.47   & 98.15   & 1.67    & 98.15  & & 1.94   & 98.76   & 0.85    & 98.19   \\
dsbmip    & 3600   & no\_gap & 0.60    & no\_gap& & 3600   & no\_gap & 0.98    & no\_gap \\
egout     & 0.05   & 93.85   & 0.03    & 93.85  & & 0.04   & 93.85   & 0.02    & 93.85   \\
enigma    & 1.06   & no\_gap & 3.92    & no\_gap& & 5.57   & no\_gap & 0.42    & no\_gap \\
fast0507  & 3600   & 3.01    & 3600    & 8.58   & & 3600   & 2.91    & 3600    & 11.35   \\
fiber     & 9.35   & 20.63   & 2.60    & 20.63  & & 1.41   & 93.55   & 0.59    & 97.10   \\
fixnet6   & 76.64  & 86.36   & 17.79   & 86.36  & & 37.88  & 86.53   & 11.32   & 87.12   \\
flugpl    & 0.00   & 11.72   & 0.00    & 11.72  & & 0.00   & 11.72   & 0.00    & 11.72   \\
gen       & 3.74   & 70.49   & 2.03    & 70.49  & & 0.38   & 82.51   & 0.20    & 93.99   \\
gesa2     & 2.10   & 59.10   & 1.38    & 59.10  & & 1.14   & 66.06   & 0.60    & 67.38   \\
gesa2\_o  & 2.23   & 59.80   & 1.75    & 59.80  & & 1.56   & 65.20   & 0.81    & 72.66   \\
gesa3     & 16.71  & 79.95   & 7.06    & 79.95  & & 3.19   & 94.40   & 1.19    & 94.59   \\
gesa3\_o  & 20.61  & 82.88   & 6.83    & 82.88  & & 2.75   & 94.40   & 1.38    & 94.40   \\
gt2       & 0.03   & 92.38   & 0.02    & 92.38  & & 0.04   & 98.34   & 0.02    & 98.58   \\
harp2     & 29.88  & 21.28   & 4.65    & 21.28  & & 7.27   & 50.39   & 5.02    & 56.03   \\
khb05250  & 0.52   & 99.86   & 0.32    & 99.86  & & 0.29   & 99.95   & 0.28    & 99.95   \\
l152lav   & 333.68 & 34.46   & 36.98   & 34.46  & & 200.71 & 61.64   & 20.55   & 64.53   \\
lseu      & 0.03   & 16.58   & 0.02    & 16.58  & & 0.02   & 77.45   & 0.01    & 69.59   \\
mas74     & 0.18   & 5.47    & 0.06    & 5.47   & & 0.05   & 7.95    & 0.04    & 8.06    \\
mas76     & 0.26   & 3.68    & 0.06    & 3.68   & & 0.05   & 7.32    & 0.02    & 7.32    \\
misc03    & 2.48   & 40.21   & 0.63    & 40.21  & & 1.64   & 40.21   & 0.49    & 40.21   \\
misc06    & 0.60   & 89.66   & 0.29    & 86.21  & & 0.48   & 95.40   & 0.21    & 87.36   \\
misc07    & 18.17  & 11.44   & 6.74    & 11.44  & & 14.18  & 13      & 4.71    & 13.31   \\
mitre     & 914.05 & 100     & 131.19  & 100    & & 6.61   & 100     & 3.48    & 100     \\
mkc       & 3600   & 50.60   & 26.72   & 63.18  & & 3600   & 62.82   & 84.80   & 64.03   \\
mod008    & 0.08   & 9.02    & 0.03    & 9.02   & & 0.04   & 37.36   & 0.02    & 37.58   \\
mod010    & 183.37 & 52.51   & 17.10   & 52.51  & & 1.86   & 100     & 0.11    & 100     \\
mod011    & 3600   & 39.41   & 3600    & 53.46  & & 3600   & 39.48   & 3600    & 56.24   \\
modglob   & 0.51   & 57.09   & 0.30    & 57.09  & & 0.72   & 57.09   & 0.47    & 61.91   \\
noswot    & 3600   & 0       & 0.06    & 0      & & 3600   & 0       & 0.08    & 0       \\
nw04      & 3600   & 35.19   & 3600    & 37.71  & & 3600   & 97.62   & 194.03  & 100     \\
p0033     & 0.01   & 8.19    & 0.01    & 8.19   & & 0.01   & 57.76   & 0.01    & 76.40   \\
p0201     & 0.75   & 46.85   & 0.61    & 46.85  & & 1.30   & 69.75   & 0.87    & 71.51   \\
p0282     & 0.50   & 93.90   & 0.21    & 93.90  & & 0.47   & 98.41   & 0.27    & 98.69   \\
p0548     & 2.66   & 91.35   & 1.65    & 91.35  & & 3.12   & 94.27   & 1.06    & 95.33   \\
p2756     & 5.72   & 81.59   & 2.89    & 81.60  & & 4.21   & 98.90   & 1.18    & 96.31   \\
%pk1       & 1.11   & 0       & 3600    & 0      & & 0.28   & 0       & 3600    & 0       \\
pp08a     & 0.25   & 79.29   & 0.13    & 79.29  & & 0.26   & 79.29   & 0.15    & 79.29   \\
pp08aCUTS & 0.69   & 68.81   & 0.43    & 68.81  & & 0.76   & 70.17   & 0.55    & 69.09   \\
qiu       & 3600   & 76.20   & 979.10  & 78.10  & & 3600   & 69.63   & 1712.86 & 78.10   \\
qnet1     & 15.59  & 94.28   & 15.30   & 94.28  & & 22.77  & 94.49   & 15.48   & 96.27   \\
qnet1\_o  & 2.07   & 87.59   & 2.89    & 87.59  & & 2.08   & 89.74   & 4.03    & 89.78   \\
rentacar  & 3600   & 63.22   & 3600    & 90.28  & & 3600   & 75.68   & 3600    & 98.51   \\
rgn       & 0.18   & 11.88   & 0.21    & 11.88  & & 0.12   & 73.65   & 0.04    & 69.88   \\
rout      & 161.12 & 28.01   & 18.61   & 28.01  & & 12.63  & 52.18   & 2.40    & 55.24   \\
set1ch    & 0.64   & 39.88   & 0.42    & 39.88  & & 0.77   & 39.88   & 0.31    & 39.88   \\
seymour   & 3600   & 3.72    & 2624.25 & 55.94  & & 3600   & 4.46    & 3144.46 & 57.69   \\
stein27   & 0.29   & 0       & 0.33    & 0      & & 0.26   & 0       & 0.48    & 0       \\
stein45   & 8.72   & 0       & 23.90   & 0      & & 10.67  & 0       & 17.54   & 0       \\
swath     & 48.53  & 2.77    & 13.00   & 2.77   & & 22.07  & 26.31   & 5.52    & 26.31   \\
vpm1      & 0.18   & 31.42   & 0.11    & 31.42  & & 0.12   & 36.07   & 0.09    & 33.17   \\
vpm2      & 0.47   & 54.29   & 0.22    & 54.29  & & 0.40   & 54.36   & 0.18    & 54.41   \\
%          &        &         &         &         &        &         &         &         \\
%Average   & 843.19 & 47.12   & 532.54  & 50.50   & 818.99 & 60.15   & 494.50  & 63.42   \\
%Geo. Mean & 8.08   & 7.57    & 3.08    & 12.20   & 5.02   & 15.22   & 1.79    & 17.09   \\
%Average    & 749.41  & 44.83   & 566.57  & 47.26   & 665.68  & 59.19   & 570.62 & 60.19   \\
%Geo. Mean  & 5.16    & 2.19    & 3.79    & 4.40    & 3.57    & 5.82    & 2.61   & 6.16    \\
\end{xtabular}
\end{center}

\topcaption{CPU time and gap closed by $\PSLP$ and $\PLP$ for MIPLIB 2003.}
\label{tab:plp_miplib2003}
\tablefirsthead{\toprule
& \multicolumn{4}{c}{$\PSLP$} &
& \multicolumn{4}{c}{$\PLP$}
\\
\cmidrule{2-5}
\cmidrule{7-10}
& \multicolumn{2}{c}{\tt Clp}
& \multicolumn{2}{c}{\tt CPLEX} &
& \multicolumn{2}{c}{\tt Clp}
& \multicolumn{2}{c}{\tt CPLEX}
\\
name & CPU(s)                        & \% gap                          & CPU(s)                      & \% gap                       &  & CPU(s)                     & \% gap                      & CPU(s) & \% gap \\
\midrule
}
\tablehead{
\toprule
& \multicolumn{4}{c}{$\PSLP$}
& \multicolumn{4}{c}{$\PLP$}
\\
\cmidrule{2-5}
\cmidrule{7-10}
& \multicolumn{2}{c}{\tt Clp}
& \multicolumn{2}{c}{\tt CPLEX}&
& \multicolumn{2}{c}{\tt Clp}
& \multicolumn{2}{c}{\tt CPLEX}
\\
name & CPU(s)                        & \% gap                          & CPU(s)                      & \% gap                       & & CPU(s)                     & \% gap                      & CPU(s) & \% gap \\
\midrule
}
\tabletail{\bottomrule}
\tablelasttail{\bottomrule}

\begin{center}
\footnotesize
\begin{xtabular}{lrrrrp{1pt}rrrr}
a1c1s1       & 1509.64 & 78.68 & 141.93  & 78.75 & & 706.01  & 78.72 & 154.91  & 78.85 \\
aflow30a     & 8.19    & 42.41 & 2.42    & 42.41 & & 4.72    & 42.61 & 2.05    & 43.21 \\
aflow40b     & 69.58   & 34.29 & 14.02   & 34.29 & & 25.37   & 35.18 & 12.11   & 35.84 \\
atlanta-ip   & 3600    & 0.02  & 3600    & 1.71  & & 3600    & 0.02  & 3600    & 1.09  \\
glass4       & 13.99   & 0     & 1.01    & 0     & & 10.96   & 0.09  & 1.82    & 0.08  \\
momentum1    & 3600    & 30.47 & 3600    & 41.09 & & 3600    & 30.11 & 3600    & 42.80 \\
momentum2    & 3600    & 19.15 & 3600    & 41.96 & & 3600    & 11.69 & 203.17  & 41.34 \\
msc98-ip     & 3600    & 0.88  & 3600    & 42.26 & & 3600    & 1.19  & 3600    & 44.55 \\
mzzv11       & 3600    & 6.54  & 3600    & 51.23 & & 3600    & 10.97 & 3600    & 98.37 \\
mzzv42z      & 3600    & 4.53  & 3060.04 & 87.74 & & 3600    & 14.66 & 371.54  & 100   \\
net12        & 3600    & 5.71  & 3600    & 22.78 & & 3600    & 4.70  & 3600    & 22.62 \\
nsrand-ipx   & 1794.49 & 36.88 & 292.98  & 36.88 & & 63.55   & 75.38 & 28.55   & 77.70 \\
opt1217      & 2.32    & 0.19  & 0.78    & 0.19 & & 1.53    & 23.54 & 17.67   & 30.10 \\
protfold     & 3600    & 6.84  & 3600    & 10.29& & 3600    & 8.34  & 3600    & 7.80  \\
rd-rplusc-21 & 3600    & 0     & 3600    & 0    & & 3600    & 0     & 3600    & 0     \\
roll3000     & 721.17  & 16.30 & 163.90  & 16.30& & 547.61  & 54.11 & 139.12  & 56.27 \\
sp97ar       & 3600    & 42.04 & 570.42  & 42.06& & 3600    & 57.47 & 828.83  & 59.94 \\
timtab1      & 0.66    & 26.99 & 0.40    & 26.99& & 0.61    & 42.31 & 0.36    & 42.45 \\
timtab2      & 1.98    & 20.98 & 1.16    & 20.98& & 3.31    & 42.69 & 1.07    & 40.18 \\
tr12-30      & 7.91    & 64.12 & 4.16    & 64.12& & 6.49    & 64.13 & 3.33    & 64.12 \\
%             &         &       &         &     &  &         &       &         &       \\
%Average      & 2006.50 & 21.85 & 1652.66 & 33.10 & 1868.51 & 29.90 & 1348.23 & 44.37 \\
%Geo. Mean    & 321.11  & 1.17  & 145.21  & 2.64  & 233.62  & 4.24  & 118.77  & 9.01  \\
%             &         &       &         &       &         &       &         &       \\
%Average      & 1907.15 & 25.88 & 1354.62 & 29.90 & 1824.79 & 35.86 & 1395.94 & 44.04 \\
%Geo. Mean    & 242.65  & 1.82  & 147.09  & 0.95  & 213.05  & 3.11  & 120.92  & 8.26  \\
\end{xtabular}
\end{center}

Several remarks can be made from these results. First the proportion of the integrity gap closed is significant on most instances. There are only 5 instances ({\tt stein27, stein45, glass4} and {\tt rd-rplusc-21}) on which no gap is closed. On average, on MIPLIB 3.0, $\PSLP$ closes $47.12\%$ of the integrity gap with Clp and $50.40\%$ with CPLEX, while on MIPLIB 2003 the gap closed is $21.85$ and $33.1\%$ respectively (note that these differences are mostly due to problems which go to the time limit). Approximating $\PLP$ closes significantly more gap: on MIPLIB 3.0 $60.15\%$ and $63.42\%$ with Clp and CPLEX respectively, on MIPLIB 2003 $29.9\%$ and $44.37\%$. The computations on $\PLP$ highlight the heuristic nature of the approximation since the gap closed can be significantly different depending on the LP solver used (while the results are usually similar for $\PSLP$). Finally, computing times vary a lot between instances but are usually close between the four methods. In total, for $\PLP$ with CPLEX (our best method) 14 problems go to the time limit (7 in MIPLIB 3.0 and 7 in MIPLIB 2003). 7 out of these 14 can be solved to optimality in a shorter computing time with CPLEX ({\tt 10teams, air04, air05, fast0507, rentacar, qiu} and {\tt mzzv11}), 6 would take longer than an hour ({\tt atlanta-ip, momentum1, msc98-ip, net12}, {\tt protfold} and {\tt rd-rplus-sc21}) and one is an open problem ({\tt dano3mip}). In spite of these instances, computing times are typically  small with a geometric mean of 5.02 and 1.79 seconds respectively for $\PLP$ with Clp and CPLEX on MIPLIB 3.0, and $233$ and $119$ seconds respectively on MIPLIB 2003. Note that computing times are often slightly higher for $\PSLP$. This is not surprising since our stopping criterion is that the point $x^*$ belongs to $\PSLP$ and that the cuts generated in $\PLP$ are always deeper.

In table \ref{tab:details}, we give geometrical means for a number of key statistics of our procedure: the number of master iterations of the procedure (i.e. the number of times step 2 of Algorithm \ref{alg:cut} is performed), the CPU time spent to solve the master problem (i.e. the total time spent in step 2), the total number of time a separation problem \eqref{MP:sd} was solved, the number of times it led to a cut, the number of times it did not lead to a cut, the CPU time spent to solve separation problems, and finally the total number of simplex iterations for solving separation problems. Statistics refer to the runs approximating $\PLP$ with CPLEX as LP solver. Averages figures are given on three sets of problems: all instances of MIPLIB 3.0, all instances in MIPLIB 3.0 that took more than 1 second of CPU time and all instances solved within the time limit.
\begin{table}[htb]
\begin{center}
\begin{tabular}{lrrr}
\hline
& MIPLIB 3 & $>$ 1s. CPU& finished\\
\hline
\# Master iterations & 56.6 & 141.3 & 47.3\\
total resolve time & 0.4 & 17.43 & 0.14\\
\# \eqref{MP:sd} solved & 563.4 & 2844.5 & 433.10\\
\# cuts & 514 & 2749& 335.7\\
\# non-cut & 49.4 & 95.5 & 97.4\\
total \eqref{MP:sd} time & 0.5 & 14.21 & 0.21\\
total \eqref{MP:sd} pivots & 7,624 & 140,599 & 3,781\\
%\# \eqref{LP:std} re-solved 
%\# \eqref{LP:std} iterations & 947 &5,827 & 603\\
\hline
\end{tabular}
\end{center}
\caption{\label{tab:details} Detailed statistics of $\PLP$ approximation on MIPLIB3 problems.}
\end{table}

The main observation we can make from table \ref{tab:details} is that the time to solve \eqref{MP:sd} is usually very small: on average it takes $8.9 10^{-4}$ seconds to solve if we consider the whole of MIPLIB 3.0, $5.0 10^{-3}$ seconds if we restrict to problem that took more than 1 second of CPU time. This can be explained by the number of pivots which is also small with about 13 pivots on average on MIPLIB 3.0 (49 if we restrict to problems that took more that 1 sec.). This is also illustrated by the fact that in total re-solving the LP relaxations after adding cuts takes about the same time as solving \eqref{MP:sd}'s. Another interesting figure is the proportion of separation problems that led to a cut. Here one should only look at the statistics for problems that are finished (other statistics are biased by unfinished problems where the proportion of cuts is typically much higher), on average 22 \% of \eqref{MP:sd}'s did not lead to a cut. 

\subsection{Comparison with GMI and lift-and-project cuts}
We now turn to the comparison with GMI and lift-and-project cuts. To perform this comparison, we use a standard implementation of a GMI cuts (following \cite{Balas.Ceria.et.al:96}) generator that we developed using COIN-OR and the publicly available code for generating lift-and-project cuts {\tt CglLandP} \cite{CglLandP,Balas.Bonami:09}. We run both codes with different number of rounds. First we compare the gaps closed after one round of cuts. The goal of this is to compare the gap closed by rank-1 cuts using $\PSLP$ and $\PLP$ with the gap closed by the rank-1 cuts generated by more classical methods. Second, we compare the gaps closed after 10 rounds of cuts. This is a reasonable setting for cut generators in a branch and-cut procedure. Finally, we compare the gaps closed after 100 and 50 rounds of cuts for GMI and lift-and-project respectively. Since the computing time for $\PLP$ are rather high compared to 10 rounds, this last setting is to see if when given a large number of rounds GMI and lift-and-project cuts can approach $\PLP$. Of course, as more round are performed, the rank of the GMI and lift-and-project cuts increases (Fischetti, Lodi and Tramontani have experimented that it typically increases faster with GMI cuts \cite{Fischetti.Lodi.Tramontani:10}). Since both codes for GMI and lift-and-project cuts work only with {\tt Clp}, we compare with our runs using the same solver. In Table \ref{tab:textbook}, we report geometrical means of the computing times and averages of gap closed for each method on MIPLIB 3.0.

\begin{table}[hbt]
\begin{center}
\begin{tabular}{lrr}
\hline
& time (sec.) & \% gap closed\\
\hline
$\PSLP$ & 8.08 & 47.12\\
$\PLP$ & 5.02 & 60.15\\
1 GMI round & 0.00 & 25.74\\
10 GMI rounds & 0.04 & 44.39 \\
100 GMI rounds & 0.45 & 49.80\\
1 l-a-p round & 0.04 & 26.38 \\
10 l-a-p rounds & 1.63 & 52.88\\
50 l-a-p rounds & 9.36 & 57.62\\
\hline
\end{tabular}
\end{center}
\caption{\label{tab:textbook} Comparison between $\PSLP$ and $\PLP$ with GMI cuts and lift-and-project cuts generated by rounds. Geometrical means of CPU times and average gap closed.}
\end{table}

Several remarks are in order. First, comparing rank-1 cuts only, we note that the gap closed with $\PSLP$ and $\PLP$ is considerably bigger than the one closed by either GMI or lift-and-project cuts. Of course, this requires a much higher computing time. Iterating GMI and lift-and-project cuts, the gap closed never reaches the one closed by $\PLP$. After 100 rounds of GMI it is only slightly higher than the gap closed by $\PSLP$. The gap closed by lift-and-project cuts after 10 and 50 rounds is significantly bigger than with GMI cuts and with $\PSLP$ but it is still smaller than with $\PLP$, while the CPU time becomes bigger for 50 rounds. Due to the tailing off effect encountered when generating GMI and lift-and-project cut by rounds, it is likely that GMI generated in this fashion will never reach the gap closed by $\PLP$ and lift-and-project cut would probably have much difficulties.

\subsection{Comparison with other methods based on rank-1 cuts}
\label{sec:rank_ones}
To finish our comparisons of gaps closed, we compare our results with several recent related works that also aim at separating rank 1 cuts. First, we make a comparison with the computations of the split (or equivalently MIR) closure performed by Balas and Saxena \cite{Balas.Saxena:08} and Dash, G\"unl\"uk and Lodi \cite{Dash.Gunluk.Lodi:10}. The split closure is a tighter relaxation than ours, but the computing times are usually very high (hours). It is therefore interesting to know how close we are able to approach those results with our comparatively fast procedure. Secondly, we make a comparison with two methods that are also aimed at generating rank-1 GMI cuts from basic tableau's. The first of these was proposed by Dash and Goycoolea \cite{Dash.Goycoolea:10} and consists of several heuristics for finding violated rank-1 GMI cuts. The second of these is a framework based on generating GMI cuts and relaxing them in a Lagrangian fashion proposed by Fischetti and Salvagnin \cite{Fischetti.Salvagnin:10}. Finally, we make a comparison with another recent work by Cornu\'ejols and Nannicini \cite{Cornuejols.Nannicini:10} that aims at generating more rank-1 split cuts by a reduce-and-split approach. The test instances we consider are a subset of MIPLIB 3.0 and the MIPLIB 2003 that were used in \cite{Dash.Goycoolea:10,Fischetti.Salvagnin:10} (split closure computations are only available for MIPLIB 3.0).

In Table \ref{tab:rank-1-GMI}, we report the average gap closed by the split closure (for each problem we took the strongest value from \cite{Balas.Saxena:08} and \cite{Dash.Gunluk.Lodi:10}), Dash and Goycoolea method, Fischetti and Salvagnin method, and our method for $\PLP$. Both \cite{Dash.Goycoolea:10} and \cite{Fischetti.Salvagnin:10} have proposed several variants of their methods, to make the comparison concise we selected only one for each approach: the {\tt default} approach of Dash and Goycoolea and the {\tt fast} approach of Fischetti and Salvagnin. We also report indication of computing times. Note that these times should be considered with much care because experiments have been performed on different machines with different CPUs and different LP solvers. The computing times in \cite{Dash.Goycoolea:10} refer to a 1.4 GHz PowerPC machine for MIPLIB3.0 and to a 4.2 PowerPC machine for MIPLIB 2003. The computing times in \cite{Fischetti.Salvagnin:10}, refer to a 2.4GHz Intel Q6600.
As can be seen from the table, the split closure is significantly stronger than any of the three methods using GMI cuts from tableau's closing 79.2\% of the gap  on MIPLIB 3.0 versus between 59.6\% and 64.9\% for the three GMI methods. This can be seen as relatively far, but given the significantly shorter computing times of all three methods, it can also be seen as a fairly good approximation (note that on 9 instances our approach closes more gap than \cite{Balas.Saxena:08} and \cite{Dash.Gunluk.Lodi:10}). Comparing the three GMI methods, our approximation of $\PLP$ closes the most gap on MIPLIB 3.0 and \cite{Fischetti.Salvagnin:10} closes the most gap on MIPLIB 2003. The computing times of \cite{Fischetti.Salvagnin:10} and ours are close (\cite{Fischetti.Salvagnin:10} is slightly slower on MIPLIB 3, but quite faster on MIPLIB 2003), while \cite{Dash.Goycoolea:10} seems to be somewhat slower (note that besides the different CPUs, the three methods have different termination criterion). An important element not conveyed by average figures is that none of the three GMI methods dominates the other two: each closes the most gap on some instances. On MIPLIB 3.0, $\PLP$ closes more gap than the other two for 16 instances, \cite{Fischetti.Salvagnin:10} closes more gap for 10 and \cite{Dash.Goycoolea:10} closes more gap for 14. On MIPLIB 2003, $\PLP$ closes more gap for 7 instances, \cite{Fischetti.Salvagnin:10} for 8 and \cite{Dash.Goycoolea:10} for 4.
\begin{table}[hbt]
\begin{center}
\begin{tabular}{lrrrr}
\hline
method &\multicolumn{2}{c}{MIPLIB 3.0} &\multicolumn{2}{c}{MIPLIB 2003} \\
&  {\em time} \footnotemark[1] & \% gap & {\em time} \footnotemark[1]& \% gap\\
\hline
Split closure (best from \cite{Balas.Saxena:08,Dash.Gunluk.Lodi:10}) & 7556 & 79.2 & --- & ---\\
%Chv\'atal and Projected Chv\'atal closure \cite{Fischetti.Lodi:07,Bonami.Cornuejols.Dash.et.al:08} & 440 & 42.81 & --- & ---\\
Heuristic rank-1 GMI cuts \cite{Dash.Goycoolea:10} &  20.05& 61.6 & 854 & 39.7\\
Relax and GMI cuts \cite{Fischetti.Salvagnin:10} ({\tt fast}) & 2.25 & 59.6 & 58.4 & 45.4 \\
$\PLP$ approximation & 1.36 & 64.4 & 119 & 44.4\\
\hline
\end{tabular}
\end{center}
\caption{\label{tab:rank-1-GMI}Comparison of methods for rank-1 GMI cuts from tableau's}.
\end{table}
\footnotetext[1]{timings on different machines}
Another method to compute split cuts that differs from traditional GMI cuts from the tableau is the reduce-and-split method \cite{Andersen.Cornuejols.ea:05}. The method was recently re-explored and enhanced by Cornu\'ejols and Nannicini \cite{Cornuejols.Nannicini:10}. In Table \ref{tab:red_split}, we compare the gap closed by our method to the gap closed in \cite{Cornuejols.Nannicini:10} and the Split Closure. The test set consists only of the Mixed models in MIPLIB 3.0 since reduce-and-split is particularly aimed at those. The gap closed reported in \cite{Cornuejols.Nannicini:10} correspond to the gap closed by one round of several cut generators for in the Cgl library (GMI cuts, MIR cuts, lift-and-project, two-step MIR, cover, flow covers) plus several variants of the reduce-and-split cuts. On these instances our method closes more gap on average than the combination of rank-1 cuts in \cite{Cornuejols.Nannicini:10}. Again it is important to note that our method does not dominate \cite{Cornuejols.Nannicini:10} and we close less gap on 9 instances.

\begin{table}[hbt]
\begin{center}
\begin{tabular}{lr}
\hline
method & \% gap \\
\hline
Split closure&78.64\\
$\PLP$&60.77 \\
Reduce and split + CglCuts \cite{Cornuejols.Nannicini:10}& 50.33\\
\hline
\end{tabular}
\end{center}
\caption{\label{tab:red_split} Comparison with reduce and split cuts on mixed models in MIPLIB 3.0.}
\end{table}

To conclude this section, we emphasis that all the rank-1 methods compared here complement each others. As noted several times, no method is dominating the others. Combining the four methods, even in a trivial way, should give a better approximation of the split closure.
\subsection{Branch-and-cut computations}
The experiments reported in the last three subsections show that our approach based on $\PLP$ seems competitive for building a strong relaxations at least for MIPLIB problems. Of course, our ultimate goal is to be able to solve those problems faster.
Applying our method in a branch-and-cut setting is not straightforward. First, the computing times, although small on average, are certainly higher than any of the cutting plane techniques already available in solvers. It can sometime be even larger than the total CPU time taken by a state-of-the-art solver such as CPLEX. Second, state-of-the art solvers combine many different cut generators and we need to include our method in the cut generation loop. Finally, although the gap closed is intuitively a good measure of the quality of a relaxation, many different factors can influence the performance in a branch-and-cut. 
Here, we present a preliminary attempt to deal with those issues and use $\PLP$ in a branch-and-cut setting. 

To setup the experiment, we use the callback system of CPLEX in order to include our procedure in its branch-and-cut loop. Note that our procedure can be included in two ways. Since the outer loop of our procedure is similar to a classical cut generation loop, we could try to include our procedure by just adding the inner loop (steps 3 and 4 of Algorithm \ref{alg:cut}) as a cut callback. This would have nevertheless the disadvantage that we would loose some control on the termination of the procedure. Therefore, we follow the more simple approach to run the complete procedure described by Algorithm \ref{alg:cut} (i.e. except for the context in which it is used it is the same procedure as before).

We call our procedure only once at the root node. Since, CPLEX has its own cutting plane procedures which are usually faster than ours, we should try as much as possible to benefit from them. In particular, even though we don't use the cuts generated by CPLEX in our separation LP, our procedure can benefit from CPLEX cutting planes because the point to cut should be closer to $\PLP$ after several rounds of cutting planes. For this reason, we try to call our procedure at the end of CPLEX cut generation loop. Since, as far as we know, there is no direct way of knowing if CPLEX has decided that it has exhausted its own cut generators, we attempt to detect it as follows: each time CPLEX calls our callback, we record the number of cuts generated for each class of cuts that CPLEX has, if from one iteration to the next none of those number changed, we launch our cutting plane procedure. At the end of the procedure, we pass to CPLEX all the cuts we found that are binding at the current optimum.

Note that many decisions that could impact the efficiency of our procedure in a branch-and-cut can be made. For example, cut generation procedures usually have policies to reject cuts that are too dense or deemed numerically unstable. Also, our termination criterion and the tolerances we use may be too tight for practical purposes. In this preliminary experiment, we do not intend to settle these issues. First, we want to keep the results in line with those of the previous sections. Second we believe that those issues can be addressed in different ways and should be the subject of further research. Our goal is just to give a preliminary view of how a procedure such as ours could be used, and what issues should be dealt with.

To test our procedure, we run it on the same set of MIPLIB 3.0 and MIPLIB 2003 problems that we used in Section \ref{sec:rank_ones}. We run two settings. First, the default CPLEX branch-and-cut algorithm (we don't use dynamic search since it is disabled when a cut callback is used in CPLEX) denoted {\tt CPLEX-BAC}. Second, CPLEX augmented with our cut callback denoted {\tt CPLEX}+$\PLP$. Note that, we apply our method to the preprocessed model of CPLEX. In both settings, we set a global time limit of 3 hours of CPU time (CPLEX is run on 1 thread). To report the results, we group the instances in three sets: {\bf set A} consists of 37 instances that are solved with both {\tt CPLEX-BAC} and {\tt CPLEX}+$\PLP$ in less than 10 seconds, {\bf set B} contains 16 instances that are not in set A but are solved by all our setups, finally {\bf set C} are 8 instances that could not be solved by any setup. Two instances do not belong to any of these sets and are reported separately: {\tt danonint} is solved by {\tt CPLEX-BAC} but not by {\tt CPLEX}+$\PLP$, {\tt nsrand-ipx} is solved by {\tt CPLEX}+$\PLP$ but not by {\tt CPLEX-BAC}. For {\tt CPLEX}+$\PLP$, we test two time limits for the $\PLP$ procedure a short time limit of 10 seconds (denoted {\tt CPLEX}+$\PLP$-10), and a long one of 1800 seconds (denoted {\tt CPLEX}+$\PLP$-1800).

We report the results in Table \ref{tab:bac}. For each set of instances, we report the average gap closed at the root, the geometrical means of CPU time and number of nodes processed and the average final gaps. 
\begin{table}
\begin{center}
\begin{tabular}{lrrrr}
\hline
method & root \% gap & time & \# nodes & final gap\\
\hline
\multicolumn{4}{c}{Instances in {\bf set A}}\\
\hline
{\tt CPLEX-BAC}& 73.2& 0.12 & 69.55 & 0\\
{\tt CPLEX}+$\PLP$& 85.9 & 0.39 & 41.04 & 0\\
\hline
\multicolumn{4}{c}{Instances in {\bf set B}}\\
\hline
{\tt CPLEX-BAC}& 32.4 & 46.64 & 11723 & 0\\
{{\tt CPLEX}+$\PLP$--10}& 46.0 & 93.85 & 10414 & 0\\
{{\tt CPLEX}+$\PLP$--1800}& 58.8 & 238.15 & 5461.31 & 0\\
\hline
\multicolumn{4}{c}{\tt danoint}\\
\hline
{\tt CPLEX-BAC}& 2.92 & 7993 & 1119303 & 0\\
{{\tt CPLEX}+$\PLP$--10}& 3.68 & $>$ 10800 & $>$ 585297 & 44.9\\
{{\tt CPLEX}+$\PLP$--1800}& 5.8 & $>$ 10800 & $>$ 188971 & 32.6\\
\hline
\multicolumn{4}{c}{\tt nsrand-ipx}\\
\hline
{\tt CPLEX-BAC}& 50.03& $>$ 10800& $>$849402 & 84\\
{{\tt CPLEX}+$\PLP$--10}& 76.8& 3449 & 220105 & 0\\
{{\tt CPLEX}+$\PLP$--1800}& 79.8 & 892.74 & 52780& 0\\
\hline
\multicolumn{4}{c}{Instances in {\bf set C}}\\
\hline
{\tt CPLEX-BAC}& 23.2 & $>$ 10800 & $>$ 31339 & 45.5\\
{{\tt CPLEX}+$\PLP$--10}& 24.7 & $>$ 10800  & $>$ 19066 & 42.7\\
{{\tt CPLEX}+$\PLP$--1800}& 28.7 & $>$ 10800  & $>$ 4383 & 42.5\\
\hline
\end{tabular}
\end{center}
\caption{\label{tab:bac}Summary of results with branch-and-cut.}
\end{table}

The result shows that we are still able to generate violated rank-1 cuts after CPLEX has generated all its cuts. Our methods allows to close $12.7\%$ more gap at the root for instances in set A, $13.6\%$ for instances in set B with a time limit of 10 seconds and $25.4\%$ with a time limit of 1800 seconds.  For instances in set C, the difference is less significant: $1.5\%$ and $5.5\%$ with a time limit of 10 and 1800 seconds respectively. Unfortunately, the improvement in terms of gap closed at the root does not carry out in terms of total solution times. For instances in set A the CPU time is more than 3 times bigger and for instances in set B it is around two time bigger with {{\tt CPLEX}+$\PLP$--10} and 5 times bigger for {{\tt CPLEX}+$\PLP$--1800}. For instances in set C, the gap closed after 3 hours is on average smaller with {{\tt CPLEX}+$\PLP$--10}. The results in terms of number of nodes are not as grim. The node reduction is limited with {{\tt CPLEX}+$\PLP$--10}, but with {{\tt CPLEX}+$\PLP$--1800} we are able to divided the total number of nodes by 2 on set B .

\section{Conclusions}
In this paper we proposed a procedure to approximate $\PLP$. Our approach seems competitive at least for strengthening the initial formulation and closing integrity gap and is also relatively fast compared to previous similar approaches. These results are another illustration of the strength of low rank cuts. The competitive CPU times are largely due to the separation procedure we use. As our experiment show, we are usually able to generate violated rank-1 cuts very quickly.

A possible source of improvement for the procedure in terms of CPU time, would be to use the separation oracle in a framework that would have better convergence properties than the Kelley cutting plane method. It is well known that the Kelley method has slow convergence. Recently, Fischetti and Salvagnin proposed an in-out scheme for optimizing $\PSLP$ which bears many similarities with our approach \cite{Fischetti.Salvagnin:10b}. The computational results they report do not seem competitive to ours in terms of CPU time but they are able to reduce significantly the number of iterations compared to a Kelley approach. These results g
ive hope that the CPU times to compute $\PSLP$ could still be significantly reduced (note however that adapting their approach does not seem trivial to us).

Another source of improvement would be to use the Balas Perregaard algorithm to strengthen the cut obtained by solving \eqref{MLP}. From a theoretical point of view, this is a straightforward thing to do. One just need to give the optimal basis of \eqref{MLP} as a starting point of the Balas Perregaard algorithm. Unfortunately, it is much more involved in practice, in particular if one want to keep the separation efficient in terms of CPU time.

Finally, we would like to emphasize the interest of finding an exact procedure to optimize $\PLP$. From the combination of our computational results and those contained in \cite{Dash.Goycoolea:10,Fischetti.Salvagnin:10}, we can try to get a feeling of how far our respective approximations of $\PLP$ are from being optimal. On the MIPLIB 3.0 test set, our respective average strengthening are $64.4\%$, $61.6\%$ and $59.6\%$ of the integrity gap closed. If we combine the three methods by taking the best result of the three methods on each instance the average goes up to $70.9\%$. This certainly shows that the three methods are still far from the value of $\PLP$. Note that even if we add up the computing times of the three procedures, the average CPU time is still a small fraction of the time to compute the split closure which closes $79.2\%$ of the integrity gap. As stated in introduction, as far as we know, the complexity of optimizing $\PLP$ is open. These numbers give a computational motivation for attempting to settle the question.

%\bibliography{Mybib}
%\bibliographystyle{plain}

\end{document}